\def\CHK#1 {\textcolor{magenta}{{\bf [CHK:}~#1{\bf ]}}~}
\def\ADD#1 {\textcolor{cyan}{{\bf [ADD:}~#1{\bf}]}~}
\newtheorem{theorem}{Theorem}[section]
\newtheorem{lemma}[theorem]{Lemma}
\def\AdaReTaKe{AdaR\scalebox{0.8}{E}T\scalebox{0.8}{A}K\scalebox{0.8}{E}}
\title{{\AdaReTaKe}: Adaptive Redundancy Reduction to Perceive Longer for Video-language Understanding}
\author{
Xiao Wang\textnormal{\textsuperscript{1}\footnotemark[1]\footnotemark[2]}  \quad
Qingyi Si\textnormal{\textsuperscript{2}\footnotemark[1]} \quad
Jianlong Wu\textnormal{\textsuperscript{1}\footnotemark[3]}   \quad
Shiyu Zhu\textnormal{\textsuperscript{3}} \quad
Li Cao\textnormal{\textsuperscript{2}} \quad
Liqiang Nie\textnormal{\textsuperscript{1}\footnotemark[3]} \\
\textsuperscript{1}Harbin Institute of Technology, Shenzhen \\
\textsuperscript{2}Huawei Technologies Co., Ltd.
\textsuperscript{3}Shandong University \\
{\tt\small scz.wangxiao@gmail.com, siqingyi@huawei.com, wujianlong@hit.edu.cn} \\
{\tt\small xyzcaoli@outlook.com, nieliqiang@gmail.com}
}
\begin{document}
\maketitle

\renewcommand{\thefootnote}{\fnsymbol{footnote}}  % 让脚注符号变为符号模式
% \footnotetext[2]{Equal contribution. *Work done during an internship at Huawei.} 

\footnotetext[1]{Equal contribution.}  
\footnotetext[2]{Work done during an internship at Huawei.}
\footnotetext[3]{Corresponding author.}

\begin{abstract}

Multimodal Large Language Models (MLLMs) have revolutionized video understanding, yet are still limited by context length when processing long videos. 
Recent methods compress videos by leveraging visual redundancy uniformly, yielding promising results. 
Nevertheless, our quantitative analysis shows that redundancy varies significantly across time and model layers, necessitating a more flexible compression strategy. 
We propose \textbf{{\AdaReTaKe}}, a training-free method that flexibly reduces visual redundancy by allocating compression ratios among time and layers with theoretical guarantees. 
% Integrated into state-of-the-art MLLMs, {\AdaReTaKe} improves
 {\AdaReTaKe}  can be seamlessly integrated into existing MLLMs as a plug-and-play solution, extending 
 their processing capacity from 256 to 2048 frames while preserving critical information. Experiments on VideoMME, MLVU, LongVideoBench, and LVBench datasets demonstrate that {\AdaReTaKe} outperforms existing methods by 2.3\% and 2.8\% for 7B and 72B models, respectively, with even greater improvements of 5.9\% and 6.0\% on the longest LVBench.
Our code is available at \url{https://github.com/SCZwangxiao/video-FlexReduc.git}.
% Our work highlights the significant improvements in long video understanding achievable through more effective redundancy reduction.

\end{abstract}

\section{Introduction}

In pursuit of general intelligence, Multimodal Large Language Models (MLLMs) \cite{2023videochat, bin_videollava_2024, wang_vid_dataflywheel_2024, wang2025haic} have revolutionized video understanding. However, current MLLMs require hundreds of tokens to represent a single image \cite{wang_qwen2vl_2024, li_llava-onevision_2024, wang2023rtq}, limiting video lengths to less than 10 minutes \cite{shen_longvu_2024, gan2023temporal}.

Efforts to extend MLLMs' capabilities for long videos include: agent systems \cite{zhang_llovi_2024} which retrieve and interpret pre-segmented videos but remain constrained by single-model abilities.
Techniques like length extrapolation \cite{zhang_longva_2024} and multi-modal sequence parallelism \cite{xue_longvila_2024} enhance usable video context length but introduce more visual redundancy.
Rather than extending context length, compression-based methods reduce video tokens into shorter sequences by leveraging visual redundancy \cite{bolya_tome_2022}. Many approaches \cite{he_ma-lmm_2024, fei_video-ccam_2024} train Q-Former \cite{li_blip-2_2023} to condense videos guided by language or learnable query tokens. 
% These methods reduce redundancy through prior knowledge in Q-Former, but they are suboptimal since Q-Former, trained from scratch, lacks the extensive world knowledge embedded in LLMs.
Recent advancements \cite{shen_longvu_2024, xiao_retake_2024} integrate compression into MLLM prefilling, yielding promising results. 

% Nevertheless, they apply a fixed ratio throughout compression, failing to address the dynamic nature of visual redundancy, highlighting the need for a more flexible video compression strategy.

\begin{figure}[t]
    \centering
    \includegraphics[width=\linewidth]{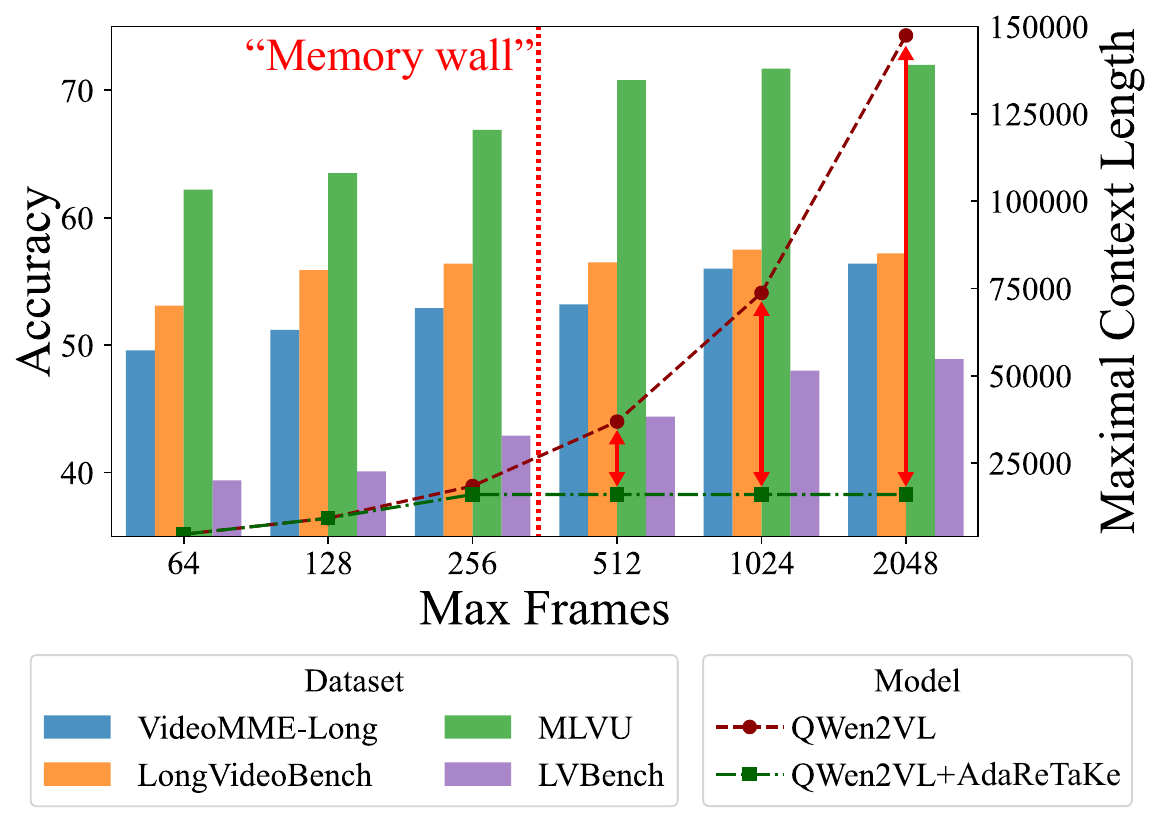}
    \caption{{\AdaReTaKe} enables MLLM to perceive longer with fixed context length for video-language understanding.}
    \label{fig:frame_scaling}
\end{figure}

In this work, we push the boundaries of compression-based methods in two key ways: first, by optimizing the compression algorithm with insights from quantitative analysis; and second, by scaling the number of frames processed to capture more information from the video.

To dive deeper into compression-based methods, we quantitatively analyze visual redundancy by examining the distribution of influential tokens (more likely to be preserved during compression) during MLLM inference, revealing significant variations across video timestamps and LLM layers. 
These findings show that previous methods with fixed compression ratios fail to capture the dynamic nature of visual redundancy, underscoring the need for a more flexible compression strategy.
In light of this, we propose \textbf{{\AdaReTaKe}}, a training-free adaptive video redundancy reduction method.
It features two modules: temporal-adaptive allocation, which adjusts compression ratios for video sequence features over time, and layer-adaptive allocation, which manages KV cache states across layers.
For temporal allocation, we divide a long video into chunks and allocate a compression ratio for each chunk based on the similarity scores between adjacent frames.
For layer allocation, we adjust compression ratios across layers based on video-prompt attention scores. Our theoretical analysis demonstrates that this approach reduces the upper bound of $L_1$ compression loss compared to uniform allocation. 
The combination of the above allocation determines a specific compression ratio for each chunk in each LLM layer.
Finally, we apply chunked prefilling for all chunks and the prompt.  During this process, the KV caches of each chunk are compressed iteratively based on the accumulated attention scores.
{\AdaReTaKe} compresses long videos into shorter sequences, allowing to perceive more informative frames within a fixed GPU memory budget, thereby enhancing long video understanding.
%At each iteration, we compress the KV caches of the current segment according to the accumulated attention scores.

We integrate {\AdaReTaKe} into cutting-edge MLLMs like QWen2-VL \cite{wang_qwen2vl_2024} and LLaVA-Video \cite{zhang_llava-video_2024}, and conduct extensive experiments across various video understanding benchmarks, including VideoMME \cite{fu_videomme_2024}, MLVU \cite{zhou_mlvu_2024}, LongVideoBench \cite{wu_longvideobench_2024}, and LVBench \cite{wang_lvbench_2024}. 
The results show that {\AdaReTaKe} significantly outperforms existing methods, achieving an average improvement of 2.3\% and 2.8\% across datasets for 7B and 72B models, respectively. 
On LVBench, the dataset with the longest average video length, the gains are even more pronounced, with improvements of 5.9\% and 6.0\%, respectively.
Additionally, the results on needle QA and temporal grounding tasks further demonstrate that our approach effectively preserves fine-grained temporal grounding capabilities.
The ablation study validates the effectiveness of our temporal and layer-adaptive budget allocation methods. Through comparison with other compression approaches, it further demonstrates the superiority of our method.
In summary, our contributions are threefold:
\begin{itemize}
    \item We identify uneven visual redundancy across time and MLLM layers and develop {\AdaReTaKe} to adaptively reduce it, expanding MLLM capacity from 256 to 2048 frames for long video understanding.
    \item We design temporal- and layer-adaptive allocation modules to allocate compression ratios across time and MLLM layers, respectively. Theoretical analysis demonstrates that the layer-wise allocation effectively minimizes the upper bound of compression loss.
    \item Our approach achieves state-of-the-art performance, surpassing existing MLLMs by an average of 2.3\% and 2.8\% across 4 datasets for 7B and 72B models, respectively. 
    % Our code is available\footnote{\url{https://anonymous.4open.science/r/2025arrfeb1384-652D}}.
\end{itemize}

\section{Related Work}

\subsection{MLLM for Long Videos}

Most existing multi-modal large language models struggle with extreme token lengths when applied directly to long videos. A commonly used and computationally manageable context length for multimodal training is 8k \cite{shen_longvu_2024}, which restricts video processing to a few minutes.

Early attempts developed \textit{video agent systems} \cite{zhang_llovi_2024, wang_videoagent_2024, luo_video_rag_2024, liu2018attentive} that segment videos into shorter clips and use MLLMs with open-source tools for retrieval, aggregation, and interpretation. However, a single model's capabilities remain limited, reducing overall effectiveness. \textit{Length extrapolation methods} \cite{zhang_longva_2024, shang_intp_2024, wei_visyarn_2024} extend context windows beyond training lengths, but GPU memory still limits context size. To address this, \citeauthor{xue_longvila_2024} introduced LongVILA, a \textit{multi-modal sequence parallelism system} that distributes computation across GPUs, but this adds communication overhead \cite{li_sp_2023}, affecting efficiency. In contrast, \textit{compression-based methods} condense video tokens into shorter sequences. Approaches \cite{he_ma-lmm_2024, fei_video-ccam_2024, cheng_he_mllm_2024, zeng_timesuite_2024, man_adacm2_2024, han_dynfocus_2024} use Q-Former \cite{li_blip-2_2023} for token compression, reducing redundancy by leveraging language or query tokens. However, Q-Former, trained from scratch, lacks the world knowledge embedded in LLMs, making these methods suboptimal. Recent advances \cite{shu_video-xl_2024, shen_longvu_2024, liu_mustdrop_2024, xiao_retake_2024} integrate compression within the LLM, achieving promising results.

\subsection{Token Compression for MLLMs}

% pruning: FastV, FitPrune, ZipVL, FocusLLaVA
% token merging: LOOK-M, SparseVLM
% layer: PyramidDrop, VL-Cache

% Token compression methods for LLMs \cite{xiao_streamingllm_2024, zhang_h2o_2023, feng_ada-kv_2024} have been developed to reduce sequence length with performance loss by evicting less important tokens. 
% Given the higher redundancy in visual tokens compared to language tokens \cite{bolya_tome_2022}, these approaches have been applied to MLLMs as well \cite{chen_fastv_2024, ye_fitprune_2024, he_zipvl_2024, zhu_focusllava_2024}.
% Further advancements include merging evicted tokens to minimize information loss \cite{wan_look-m_2024, zhang_sparsevlm_2024}, and exploring redundancy patterns across LLM layers \cite{xing_pyramiddrop_2024, tu_vl-cache_2024}. However, unlike our adaptive allocation methods, these approaches do not exploit temporal redundancy and allocate compression ratios across layers either monotonically \cite{xing_pyramiddrop_2024} or via heuristic functions \cite{tu_vl-cache_2024}, leading to suboptimal results.

Token compression methods for LLMs \cite{xiao_streamingllm_2024, zhang_h2o_2023, feng_ada-kv_2024} reduce sequence length by evicting less important tokens, typically with some performance loss. Given the higher redundancy in visual tokens compared to language tokens \cite{bolya_tome_2022}, these methods have been extended to MLLMs \cite{chen_fastv_2024, ye_fitprune_2024, he_zipvl_2024, zhu_focusllava_2024}. Advancements include merging evicted tokens to reduce information loss \cite{wan_look-m_2024, zhang_sparsevlm_2024} and analyzing redundancy across layers \cite{xing_pyramiddrop_2024, tu_vl-cache_2024}. However, unlike our adaptive allocation approach, these methods fail to exploit temporal redundancy and allocate compression ratios either monotonically \cite{xing_pyramiddrop_2024} or via heuristics \cite{tu_vl-cache_2024}, resulting in suboptimal performance.

In this paper, we advance token compression methods for MLLMs by adaptively adjusting the compression ratio across timestamps and layers to reduce redundancy more effectively.

\section{Preliminary Analysis} \label{sec:analysis}

In this section, we provide a quantitative analysis of the visual redundancy with MLLM for long video understanding. 
Intuitively, redundancy varies across dimensions: at the frame level, static scenes are more redundant than dynamic ones, and at the model level, deeper layers focus on more abstract features, leading to different attention patterns. 
To quantify this, we measure redundancy through the ratio of heavy-hitters \cite{zhang_h2o_2023}, a set of influential tokens essential for generation. By identifying these across dimensions, we validate the varying redundancy levels, providing a strong motivation for our approach to achieve more flexible and efficient compression.
% Our findings reveal that redundancy varies across different timestamps and LLM layers, highlighting the need for a novel video compression algorithm that can adaptively adjust the compression ratio across these dimensions.

\begin{figure}[t]
    \centering
    \includegraphics[width=\linewidth]{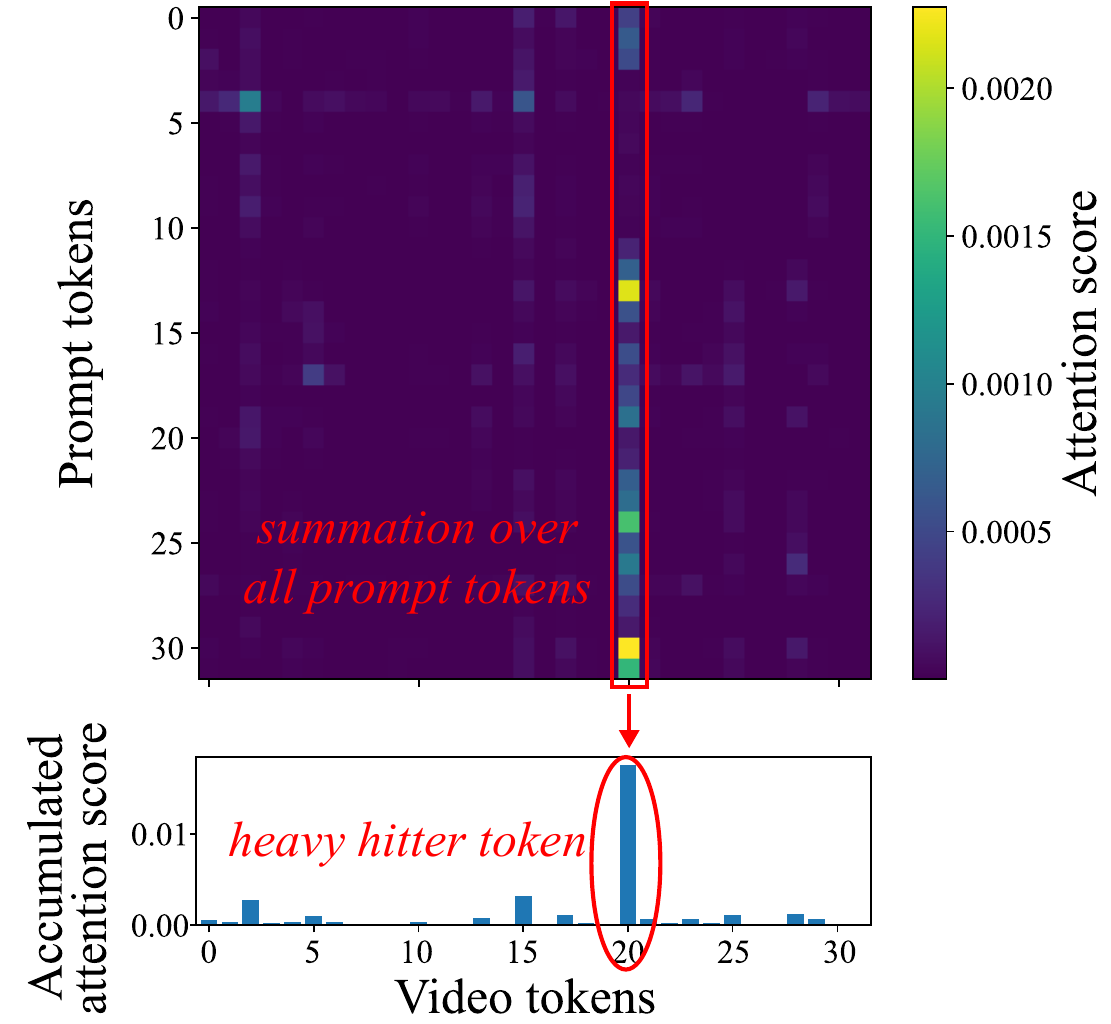}
    \caption{Illustrating example of a heavy hitter. We adopt the heavy hitter ratio to measure the redundancy}
    \label{fig:hitter_definition}
\end{figure}

\noindent\textbf{Heavy-hitter ratio to measure redundancy.}
Denote the number of attention heads as $h$, length of prompt and video tokens as $L_t$ and $L_v$, respectively, and the attention scores of them in layer $l$ is $\textbf{A}^{(l)}\in\mathbb{R}^{h\times L_t \times L_v}$. We first calculate the prompt-accumulated head-average attention scores to measure the influence of each video token during generation $\mathbf{a}\in\mathbb{R}^{L_v}$:
\begin{equation}\label{eq:head_average}
    \mathbf{a}^{(l)} = \sum_{j=1}^{L_t}\frac{1}{h}\sum_{i=1}^{h}{\textbf{A}^{(l)}[i,j]}.
\end{equation}
We then calculate the \textit{heavy-hitter ratio} $\lambda^{(l)}\in\mathbb{R}$:
\begin{equation}
    \lambda^{(l)}=\frac{1}{L_v}\sum_{i=1}^{L_v}{\mathbb{1}{\left(\mathbf{a}^{(l)}[i] > p \max{\left\{ \mathbf{a}^{(l)} \right\}} \right)}},
\end{equation}
where $\mathbb{1}(\cdot)\in\{0,1\}$ is the indicator function and $p=0.01$ is a heuristic constant.A video token is considered important (called a \textit{heavy-hitter}) if its accumulated attention $\mathbf{a}^{(l)}[i]$ exceeds $p$ times the maximum attention value in $\mathbf{a}^{(l)}$.

\begin{figure}[t]
    \centering
    \includegraphics[width=\linewidth]{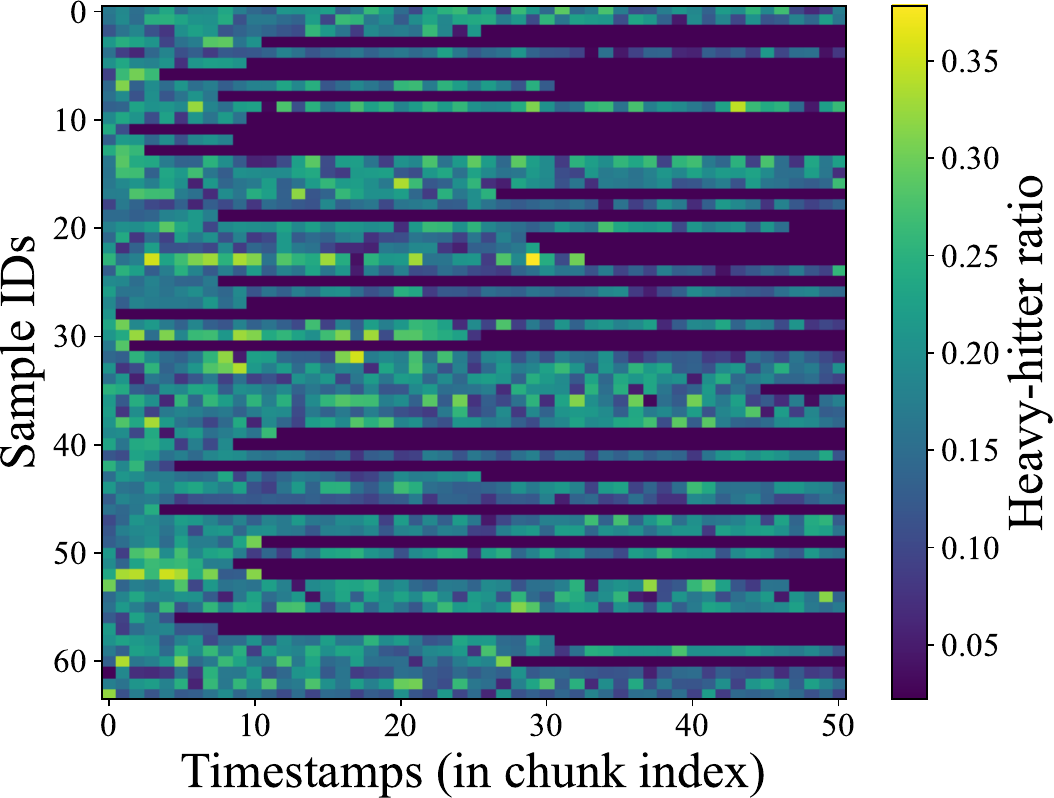}
    \caption{Heavy-hitter ratio among timestamps, showing the unevenly distributed temporal redundancy.
    The horizontal shaded bars indicate timestamps where the video has ended.}
    \label{fig:hitter_temporal}
\end{figure}

\begin{figure}[t]
    \centering
    \includegraphics[width=\linewidth]{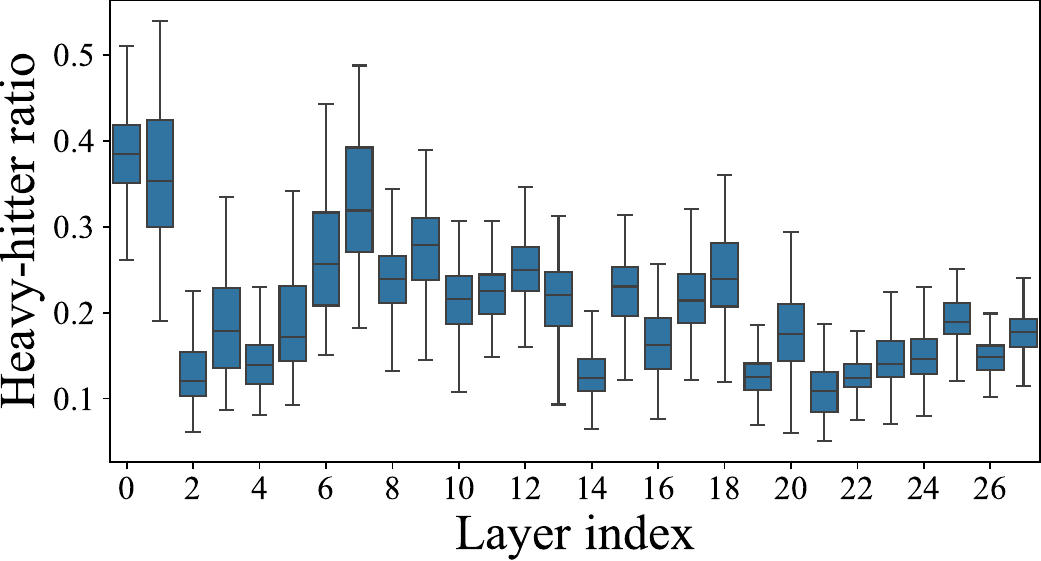}
    \caption{Heavy-hitter ratio among layers, showing the unevenly distributed redundancy among LLM layers.}
    \label{fig:hitter_layer}
\end{figure}

\noindent\textbf{Redundancy among video timestamps.} 
To explore the distribution of redundancy over time, we first split the video tokens into chunks of 10 seconds, and denote the heavy hitter ratio chunk $t$ as $\lambda^{(t,l)}$. 
We randomly sampled 64 videos from VideoMME \cite{fu_videomme_2024} and plotted the layer-averaged heavy hitter ratio $\sum_k\lambda^{(t,l)}$ across different chunks as a heatmap in \autoref{fig:hitter_temporal}. 
The temporal redundancy is unevenly distributed, with the heavy-hitter ratio varying up to 3x within a video, as highlighted by the red circle in \autoref{fig:hitter_definition}.

\noindent\textbf{Redundancy among LLM layers.}
To investigate the distribution of redundancy across LLM layers in MLLM, we utilized all videos from VideoMME \cite{fu_videomme_2024} and plotted heavy hitter ratio $\sum_k\lambda^{(t,l)}$ across different layers as a boxplot in \autoref{fig:hitter_layer}. 
The redundancy is unevenly distributed among the LLM layers. Generally, the heavy hitter ratio is lower in the deeper layers, but significant fluctuations are observed, with local minima at layers 2, 14, and 21, and maxima at layers 7 and 18.
This indicates that token compression methods that monotonically assign higher compression ratios to deeper layers, such as PyramidDrop \cite{xing_pyramiddrop_2024}, are suboptimal for video understanding.

To maximize the use of informative frames within a fixed GPU memory budget, we must design a video compression algorithm that adaptively adjusts the compression ratio across different timestamps and LLM layers.

\section{Methods}

\begin{figure}[t]
    \centering
    \includegraphics[width=\linewidth]{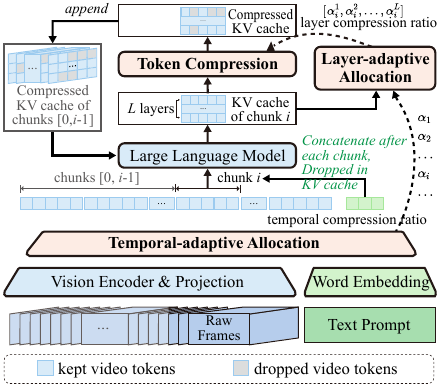}
    % \vspace{-1em}
    \caption{    
    Illustration of {\AdaReTaKe}.
    }
    \label{fig:method_overview}
    % \vspace{-1em}
\end{figure}

\subsection{Overview}

The architecture of {\AdaReTaKe} is shown in \autoref{fig:method_overview}. 
To flexibly reduce redundancy across timestamps, we divide video sequences into equal chunks and the \textbf{Temporal-adaptive Allocation} module dynamically applies distinct compression ratios to each chunk. 
For redundancy across layers, the \textbf{Layer-adaptive Allocation} module assigns varying compression ratios to LLM layers.
Finally, the \textbf{Token Compression} module compresses the KV cache after each chunk's prefilling based on the compression ratios determined by the previous modules, reducing the video sequence length in an MLLM.
The general pipeline and these three modules are detailed below.

\subsection{General Pipeline}

% To reduce the redundancy of video tokens, {\AdaReTaKe} leverages the autoregressive nature of LLM to split video sequences into equal chunks and apply chunked prefilling for each chunk. In each iteration, text prompts are concatenated after video tokens to guide token compression using the video-prompt attention scores.

Denote $T$ number of frames, $N$ number of tokens in each frame, $\tau$ number of frames in a chunk (can divide $T$), $S$ prompt length, $L$ number of LLM layers, and $C_{max}$ is a refined maximal context length. 

Given raw frames and a text prompt as input, the visual encoder and projection layer derive video features $\mathbf{M} \in \mathbb{R}^{T\times N \times d}$, and the word embedding layer derives prompt features $\mathbf{P} \in \mathbb{R}^{S \times d}$. We split visual features into chunks of $\tau$ frames:
\begin{equation}
    \mathcal{M}=\left[ 
        \mathbf{M}_1, \mathbf{M}_2, ..., \mathbf{M}_{T/\tau} 
    \right], \mathbf{M}_i\in \mathbb{R}^{\tau \times N \times d}.
\end{equation}
The temporal-adaptive allocation module will produce a compression ratio (length after compression/original length) for each chunk based on the number of tokens in $\mathcal{M}$ and $C_{max}$:
% \begin{empheq}[left=\empheqlbrace]{align}\label{eq:temp_alloc_overall}
%     & \left[ 
%         \alpha_1, \alpha_2, ..., \alpha_{T/\tau} 
%     \right],\alpha_i\in\mathbb{R}, \\
%     & \alpha_1+\alpha_2+ ...+ \alpha_{T/\tau}=\frac{C_{max}-L}{TN}.
% \end{empheq}
\begin{align}\label{eq:temp_alloc_overall}
    &\left[ \alpha_1, \alpha_2, \dots, \alpha_{T/\tau} \right], \quad \alpha_i \in \mathbb{R}, \\
    &\text{s.t.} \quad \alpha_1 + \alpha_2 + \dots + \alpha_{T/\tau} = \frac{C_{\text{max}} - S}{TN}.
\end{align}
The above equation ensures the final total sequence length (in KV cache memory) is $C_{max}$. Note that we do not consider memory usage for other operations since for long sequence inference the KV cache occupies the most GPU memory \cite{coleman_kvquant_2024}.

We employ chunk-based processing instead of single-frame processing to enhance the robustness of the allocation process and reduce memory overhead in temporal-adaptive allocation, as detailed in Section~\ref{sec:layer_alloc}.%We use chunks instead of a single frame to make this allocation process more robust. Besides, it also reduces the memory overhead in temporal-adaptive allocation, see Section~\ref{sec:layer_alloc}.

Due to the autoregressive nature of LLMs, chunked prefilling is applied to each chunk, which is functionally equivalent to standard prefilling \cite{zeng_chunked_prefill_2024}. During the $i$-th iteration, chunk $i$ is first prefilled. For each layer $l$, the query states of the prompt $\mathbf{Q}^{(l)}_i \in \mathbb{R}^{S \times d}$ and the KV caches of chunk $i$ $\mathbf{K}^{(l)}_i, \mathbf{V}^{(l)}_i \in \mathbb{R}^{h\times \tau N \times d}$ are stored, where $h$ is the number of heads. These, along with the chunk's compression ratio $\alpha_i$, are processed by the layer-adaptive allocation module to determine the compression ratio for each layer:
% \begin{empheq}[left=\empheqlbrace]{align}\label{eq:layer_alloc_overall}
%     &\left[ 
%         \alpha_i^{(1)}, \alpha_i^{(2)}, ..., \alpha_i^{(l)} 
%     \right],\alpha_i^{(l)}\in\mathbb{R}, \\
%     &\frac{\alpha_i^{(1)}+\alpha_i^{(2)}+ ...+ \alpha_i^{(l)}}{L}=\alpha_i.
% \end{empheq}
\begin{align}\label{eq:layer_alloc_overall}
    &\left[ \alpha_i^{(1)}, \alpha_i^{(2)}, \dots, \alpha_i^{(l)} \right], \quad \alpha_i^{(l)} \in \mathbb{R}, \\
    \text{s.t.} \quad &\frac{\alpha_i^{(1)} + \alpha_i^{(2)} + \dots + \alpha_i^{(L)}}{L} = \alpha_i.
\end{align}
Finally, token compression is applied to the visual KV caches of chunk $i$, deriving the compressed KV cache $\hat{\mathbf{K}}^{(l)}_i,\hat{\mathbf{V}}^{(l)}_i\in\mathbb{R}^{\alpha_i^{(l)}\tau N\times d}$. The prompt states are dropped except in the last chunk.

\subsection{Temporal-adaptive Allocation}\label{sec:temporal_alloc}

Given chunked video frames $\mathcal{M}$ and maximal context length $C_{max}$, this module calculates the compression ratio for each chunk.

For video features of the $i$-th chunk $\mathbf{M}_i\in \mathbb{R}^{\tau\times N \times d}$, we first calculate the distance between adjacent frames $\mathbf{d}_i\in\mathbb{R}^{\tau-1}$:
\begin{equation}
    \mathbf{d}_i[t] = 1 - \sum_{j=1}^N
    \frac{\textmd{Sim}\left( \mathbf{M}_i[t,j], \mathbf{M}_i[t+1,j] \right)}
    {N},
\end{equation}
where $\textmd{Sim}(\cdot)$ represents the cosine similarity. 
We then average $\mathbf{d}_i$ among its $\tau-1$ frames to get the averaged distance of $i$-th chunk $\bar{d_i}\in\mathbb{R}$, which reflects the temporal redundancy within the chunk. %. This can represent the temporal redundancy within the chunk.
Finally, the compression ratio $\alpha_i$ for each chunk is computed by allocating the maximal context length $C_{\text{max}}$ proportionally to the mean distances: %Finally, we allocate the maximal context length among chunks by calculating the compression ratio for each chunk $\alpha_i$:
\begin{equation}
    \alpha_i=
    \frac{C_{max}-S}{TN}
    % \bar{d_i}/\sum_{i=1}^{T/\tau}\bar{d_i}
    \cdot \frac{\bar{d_i}}{\sum_{i=1}^{T/\tau}\bar{d_i}}.
\end{equation}

\subsection{Layer-adaptive Allocation}\label{sec:layer_alloc}

When prefilling chunk $i$ in the $l$-th LLM layer, we store the query states of the prompt $\mathbf{Q}^{(l)}_i$, KV cache of chunk $\mathbf{K}^{(l)}_i, \mathbf{V}^{(l)}_i$. This module calculates the compression ratio for chunk $i$ in each layer.

In the $l$-th layer, we first calculate the attention score between prompt and the video tokens $\textbf{A}_i^{(l)}\in\mathbb{R}^{h\times S \times \tau N}$. We then calculate the head-averaged accumulated scores along all prompt tokens to measure the significance score of each token to the prompt, $\mathbf{a}_i^{(l)}\in\mathbb{R}^{\tau N}$:
\begin{equation}\label{eq:sig_score}
    \mathbf{a}_i^{(l)} = \sum_{j=1}^{S}\frac{1}{h}\sum_{i=1}^{h}{\textbf{A}_i^{(l)}[i,j]}.
\end{equation}

To measure the significance of each layer, we calculate the number of tokens with large significance scores, denoted as $s_i^{(l)}\in\mathbb{Z}$:
% \begin{empheq}[left=\empheqlbrace]{align}
%     s_i^{(l)} &= \sum_{j=1}^{\tau N}\mathbb{1}\left( 
%     \mathbf{a}_i^{(l)}[j] > \hat{a}_i\right),\\
%     \hat{a}_i &= K\textmd{thValue}\left(\left[
%     \mathbf{a}_i^{(1)}||\mathbf{a}_i^{(2)}||...||\mathbf{a}_i^{(l)}\right]\right), \\
%     K &= \alpha_i\tau NL,
% \end{empheq}
\begin{align}
    s_i^{(l)} &= \sum_{j=1}^{\tau N}\mathbb{1}\left( \mathbf{a}_i^{(l)}[j] > \hat{a}_i\right), \\
    \text{s.t.}\quad \hat{a}_i &= \mathop{\mathrm{K\text{-}th}}\left(\mathbf{a}_i^{(1)} \Vert \cdots \Vert \mathbf{a}_i^{(l)}\right), \\ 
    K &= \alpha_i\tau NL.
\end{align}
where $\mathbb{1}(\cdot)\in\{0,1\}$ is the indicator function, $\mathop{\mathrm{K\text{-}th}}(\cdot)$ denotes the $K$-th largest value in the vector, and $||$ denotes vector concatenation operation. Finally, we allocate the compression ratio of each layer by re-weighting the total compression ratio of current $\alpha_i$ in each layer:
% \begin{empheq}[left=\empheqlbrace]{align}\label{eq:layer_alloc_detail}
%     \alpha_i^{(l)} &= \frac{w_i^{(l)}}{\sum_{k=1}^{(l)} w_i^{(k)}}\alpha_i L, \\
%     w_i^{(l)} &= \frac{s_i^{(l)}}{\sum_{k=1}^{(l)} s_i^{(k)}}.
% \end{empheq}
\begin{align}\label{eq:layer_alloc_detail}
    \alpha_i^{(l)} &= w_i^{(l)}\alpha_i, \\
    w_i^{(l)} &= \frac{s_i^{(l)}}{\sum_{k=1}^l s_i^{(k)}}.
\end{align}
Note that sometimes the $\hat{w}_i^{(k)}$ above might be too small. To ensure numerical stability, we introduce a minimal weight $\epsilon = 0.01$ and compute the re-normalized re-weighting factor $\hat{w}_i^{(l)}$:%we use the re-normalized re-weighting factor $\hat{w}_i^{(k)}$ to ensure a minimal weight $\epsilon=0.01$:
\begin{equation}
    \hat{w}_i^{(l)} = \frac{\max(w_i^{(l)} - \epsilon, 0)}{\sum_{k=1}^l \max(w_i^{(k)} - \epsilon, 0)} (1 - L \epsilon) + \epsilon.
\end{equation}

For memory-efficient implementation, we calculate Eqn.~(\ref{eq:sig_score}) after each layer.%, eliminating the need to store query states during prefilling.

\subsection{Token Compression}\label{sec:token_compression}

After prefilling the $i$-th chunk, we first drop the prompt tokens in the KV cache (except the last chunk). 
Based on the compression ratio derived from Eqn.~\ref{eq:layer_alloc_detail}, we then compress video tokens by selecting tokens with the top significant scores and then update the KV cache in each layer $\mathbf{K}^{(l)}$, $\mathbf{V}^{(l)}$:
% \begin{empheq}[left=\empheqlbrace]{align}
%     \mathcal{I} &= \text{ArgTopK}(\mathbf{a}_i^{(l)}), K=\alpha_i^{(l)} \tau N, \\
%     % \mathbf{K}^{(l)} &\gets  \textmd{Concatenation}\left( \mathbf{K}^{(l)} || \mathbf{K}_i^{(l)}[:,\mathcal{I}] \right ), \\
%     % \mathbf{V}^{(l)} &\gets  \textmd{Concatenation}\left( \mathbf{V}^{(l)} || \mathbf{V}_i^{(l)}[:,\mathcal{I}]\right ),
%     \mathbf{K}^{(l)} &\gets  \left[ \mathbf{K}^{(l)} || \mathbf{K}_i^{(l)}[:,\mathcal{I}] \right ], \\
%     \mathbf{V}^{(l)} &\gets  \left[ \mathbf{V}^{(l)} || \mathbf{V}_i^{(l)}[:,\mathcal{I}]\right ],
% \end{empheq}
\begin{align}
    \mathcal{I} &= \text{ArgTopK}(\mathbf{a}_i^{(l)}), \quad K = \alpha_i^{(l)} \tau N, \\
    \mathbf{K}^{(l)} &\gets \left[ \mathbf{K}^{(l)} \parallel \mathbf{K}_i^{(l)}[:,\mathcal{I}] \right], \\
    \mathbf{V}^{(l)} &\gets \left[ \mathbf{V}^{(l)} \parallel \mathbf{V}_i^{(l)}[:,\mathcal{I}] \right].
\end{align}
where $\text{ArgTopK}(\cdot)$ denotes the indices of $K$ elements with the largest value in the vector.

We also provide a theoretical guarantee for our layer-wise budget allocation method. See \autoref{sec:proof_token_comp} for more details.
\begin{theorem}
Let $I_i^{(l)}\in\{0,1\}$ denotes whether token $i$ in layer $l$ is kept during compression.
Given the token sequence budget $\sum_l\sum_iI_i^{(l)}=K$, making token compression choices $\left\{\mathbf{I}_*^{(l)}\right\}_{l=1}^L$ based on top $K$ values in $\left\{A_i^{(l)}\right\}$ can achieve a near-optimal minimization of the upper bound of token compression loss to $\epsilon_*^{(l)}$:
\begin{equation}
    \epsilon_*^{(l)}
    \le
    2C
    +
    2C
    \left(\frac{\epsilon_{opt}^{(l)}}{2C}-1\right)^
    {1-\frac{1}{e}},
\end{equation}
where $\epsilon_{opt}^{(l)}$ is the theoretical minimal of $\epsilon^{(l)}$ and $C$ is a constant.
\end{theorem}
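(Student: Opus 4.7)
The plan is to bound the per-layer compression loss by a concave function of the retained attention mass, reformulate the joint allocation problem as monotone submodular maximization, and then invoke the standard greedy $(1-1/e)$ approximation guarantee. First, I would quantify $\epsilon^{(l)}$ as an $L_1$ distance between the original attention output $\sum_i \mathrm{softmax}(QK^\top)_i V_i$ and its compressed counterpart. Using the triangle inequality together with the $1$-Lipschitz property of softmax in $L_1$, one obtains a per-layer bound of the form $\epsilon^{(l)} \le 2C - g^{(l)}(\mathcal{I}^{(l)})$ for a bounded, monotone-concave function $g^{(l)}$ of the retained index set $\mathcal{I}^{(l)}$, where the additive $2C$ collects the $L_\infty$ bound on $V^{(l)}$ together with an intrinsic softmax perturbation that cannot be driven below $2C$ regardless of which tokens are kept.

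Next, the joint problem of choosing $\mathcal{I} = \bigcup_l \mathcal{I}^{(l)}$ subject to the global budget $|\mathcal{I}| = K$ amounts to maximizing $F(\mathcal{I}) = \sum_l g^{(l)}(\mathcal{I}^{(l)})$. Because each $g^{(l)}$ is a concave function of a modular set function (the retained mass $\sum_{i \in \mathcal{I}^{(l)}} A_i^{(l)}$), $F$ is monotone submodular on the ground set of (layer, token) pairs. The rule of \autoref{sec:token_compression}, which selects the top $K$ entries of $\{A_i^{(l)}\}$, is exactly the cardinality-constrained greedy algorithm on $F$, so Nemhauser--Wolsey--Fisher yields $F(\mathcal{I}_*) \ge (1 - 1/e)\, F(\mathcal{I}_{opt})$; this transfers through $\epsilon^{(l)} \le 2C - g^{(l)}$ into a relation between $\epsilon_*^{(l)}$ and $\epsilon_{opt}^{(l)}$.

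The third step is to rearrange the submodular guarantee into the functional form in the statement. Writing $x_* = \epsilon_*^{(l)}/(2C) - 1$ and $x_{opt} = \epsilon_{opt}^{(l)}/(2C) - 1$, passing the $(1-1/e)$-inequality into log-space gives $\log x_* \le (1 - 1/e)\log x_{opt}$, which exponentiates to $x_* \le x_{opt}^{\,1 - 1/e}$ and hence to the displayed bound on $\epsilon_*^{(l)}$. Carrying out this transformation cleanly will require fixing $g^{(l)}$ so that both $x_*$ and $x_{opt}$ remain nonnegative and the monotonicity of the shift-and-log map is preserved.

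The main obstacle I expect is \emph{localization}: Nemhauser--Wolsey--Fisher naturally produces the aggregate bound $F(\mathcal{I}_*) \ge (1 - 1/e)\,F(\mathcal{I}_{opt})$, whereas the theorem asserts a per-layer statement about $\epsilon_*^{(l)}$. Descending from the global guarantee to each layer is not automatic and will likely require exploiting the separable structure of $F$, together with defining $\epsilon_{opt}^{(l)}$ as the layer-local optimum under the budget $|\mathcal{I}_{opt}^{(l)}|$ induced by the global optimum. Showing that $C$ can be chosen independently of $l$, and that the concavification step is tight enough to preserve the $1 - 1/e$ exponent after localization, is the subtle piece where I would expect most of the technical effort to concentrate.
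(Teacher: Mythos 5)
Your second half --- recognizing that the allocation problem is a monotone submodular maximization of a concave function of modular set functions, invoking the Nemhauser--Wolsey--Fisher $(1-1/e)$ guarantee for the top-$K$/greedy rule, and transferring the guarantee back through the affine map $\epsilon = 2C - 2C F$ --- is essentially the paper's argument: the paper maximizes $f(\mathcal{S}) = \sum_{l}\log\bigl(\sum_{i\in\mathcal{S}_l} A_i^{(l)}\bigr)$, which is exactly a sum of concave functions of modular functions. (Both you and the paper gloss over the fact that selecting the top $K$ values of $A_i^{(l)}$ is \emph{not} literally the greedy algorithm for $f$, since the marginal gain $\log(S_l + A_e^{(l)}) - \log(S_l)$ depends on the current per-layer mass $S_l$ and not only on $A_e^{(l)}$; that weakness is shared.)

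The genuine gap is in your first step, where you posit a \emph{separable} per-layer bound $\epsilon^{(l)} \le 2C - g^{(l)}(\mathcal{I}^{(l)})$ with $g^{(l)}$ depending only on layer $l$'s own retained set, and then aggregate additively via $F(\mathcal{I}) = \sum_l g^{(l)}(\mathcal{I}^{(l)})$. This is not valid and is not what the paper proves. Compressing the KV cache at layers $k<l$ perturbs the input (hence the query $\hat{\mathbf{q}}^{(l)}$, hence the attention weights $\hat{A}^{(l)}$) of layer $l$, so the layer-$l$ error necessarily depends on all earlier choices. The paper handles this with an induction (its error-propagation lemma): it shows $\hat{A}_i^{(l)} \ge A_i^{(l)}\bigl(1 - 2\epsilon^{(l-1)}\bigr)$ via H\"older bounds on the query perturbation, and, under the assumption $4C^{(l)}>1$, obtains the \emph{multiplicative} bound
\begin{equation}
    \epsilon^{(l)} = 2C^{(l)} - 2C^{(l)} \prod_{k=1}^{l} \sum_{i=1}^{n} I_i^{(k)} A_i^{(k)},
\end{equation}
whose logarithm is the sum-of-logs objective. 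Your additive $F$ skips this coupling entirely, and it also leaves unexplained how a collection of per-layer errors would bound the quantity the theorem is actually about, namely the $L_1$ deviation of the \emph{last} layer's output. Your closing worry about ``localization'' is an artifact of this wrong decomposition: in the paper there is nothing to localize, because the bound for layer $l$ already contains the product over layers $1,\dots,l$ and the theorem is applied at $l=L$. To repair your proof, replace the separable bound with the induction establishing the product form (this is where the assumption $4C^{(l)}>1$ and the bound on $\|\delta\mathbf{q}^{(l)}\|_1$ enter), and only then take logarithms to reach the submodular formulation you correctly describe.
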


\section{Experiments}

% Please add the following required packages to your document preamble:
% \usepackage{multirow}
% \usepackage[table,xcdraw]{xcolor}
% Beamer presentation requires \usepackage{colortbl} instead of \usepackage[table,xcdraw]{xcolor}
\begin{table*}[t]

\centering

\resizebox{0.9\textwidth}{!}{
\begin{tabular}{ccccccc}
\hline
{\color[HTML]{363636} } & {\color[HTML]{363636} } & \multicolumn{2}{c}{{\color[HTML]{363636} VideoMME}} & {\color[HTML]{363636} MLVU} & {\color[HTML]{363636} LongVideoBench} & {\color[HTML]{363636} LVBench} \\ \cline{3-7} 
\multirow{-2}{*}{{\color[HTML]{363636} Model}} & \multirow{-2}{*}{{\color[HTML]{363636} LLM Size}} & {\color[HTML]{363636} Long} & {\color[HTML]{363636} Overall} & dev & {\color[HTML]{363636} val} & val \\ \hline
\rowcolor[HTML]{F2F2F2} 
GLM-4V-Plus & - & - & 70.8 & - & - & 58.7 \\
\rowcolor[HTML]{F2F2F2} 
GPT-4o & - & 65.3 & 71.9 & 64.6 & 66.7 & 27.0 \\
\rowcolor[HTML]{F2F2F2} 
Gemini-1.5-Pro & - & 67.4 & 75.0 & - & 64.0 & 33.1 \\ \hline
VITA-1.5 & 7B & 47.1 & 56.1 & - & - & - \\
mPLUG-Owl3 & 7B & 50.1 & {\color[HTML]{363636} 59.3} & 63.7 & 52.1 & - \\
NVILA & 8B & 54.8 & 64.2 & {\color[HTML]{363636} 70.1} & 57.7 & - \\
ByteVideoLLM & 14B & 56.4 & 64.6 & 70.1 & - & - \\
TPO & 7B & 55.4 & 65.6 & {\color[HTML]{363636} 71.1} & 60.1 & - \\
VideoLLaMA3 & 7B & - & 66.2 & {\color[HTML]{363636} 73.0} & 59.8 & 45.3 \\
\rowcolor[HTML]{EAFAF1} 
{\color[HTML]{A5A5A5} LLaVA-Video} & {\color[HTML]{A5A5A5} 7B} & {\color[HTML]{A5A5A5} 52.4} & {\color[HTML]{A5A5A5} 63.3} & {\color[HTML]{A5A5A5} 67.0} & {\color[HTML]{A5A5A5} 58.2} & {\color[HTML]{A5A5A5} 43.1} \\
\rowcolor[HTML]{EAFAF1} 
LLaVA-Video+{\AdaReTaKe} & 7B & 53.9 & 64.0 & 70.6 & 59.6 & 49.6 \\
\rowcolor[HTML]{EAFAF1} 
{\color[HTML]{A5A5A5} Qwen2-VL} & {\color[HTML]{A5A5A5} 7B} & {\color[HTML]{A5A5A5} 53.8} & {\color[HTML]{A5A5A5} 63.3} & {\color[HTML]{A5A5A5} 66.9} & {\color[HTML]{A5A5A5} 55.6} & {\color[HTML]{A5A5A5} 42.4} \\
\rowcolor[HTML]{EAFAF1} 
QWen2-VL+{\AdaReTaKe} & 7B & 56.4 & 64.2 & 72.0 & 57.2 & 48.9 \\
\rowcolor[HTML]{EAFAF1} 
{\color[HTML]{A5A5A5} Qwen2.5-VL} & {\color[HTML]{A5A5A5} 7B} & {\color[HTML]{A5A5A5} 55.6} & {\color[HTML]{A5A5A5} 65.4} & {\color[HTML]{A5A5A5} 70.2} & {\color[HTML]{A5A5A5} 59.5} & {\color[HTML]{A5A5A5} 45.3} \\
\rowcolor[HTML]{EAFAF1} 
QWen2.5-VL+{\AdaReTaKe} & 7B & \textbf{58.3} & \textbf{67.7} & \textbf{75.0} & \textbf{62.6 }& \textbf{51.2} \\ \hline
LLaVA-OneVision & 72B & 60.0 & 66.3 & 68.0 & 61.3 & - \\
Oryx-1.5 & 32B & 59.3 & {\color[HTML]{363636} 67.3} & 72.3 & 62.0 & 30.4 \\
Aria & 8x3.5B & 58.8 & 67.6 & 70.6 & 65.3 & - \\
LLaVA-Video & 72B & 61.5 & {\color[HTML]{363636} 70.6} & {\color[HTML]{363636} 74.4} & 61.9 & - \\
Qwen2-VL & 72B & 62.2 & 71.2 & - & 60.4 & 41.3 \\
InternVL2.5 & 72B & 62.6 & 72.1 & 75.7 & 63.6 & 43.6 \\
\rowcolor[HTML]{EAFAF1} 
{\color[HTML]{A5A5A5} Qwen2.5-VL} & {\color[HTML]{A5A5A5} 72B} & {\color[HTML]{A5A5A5} 63.9} & {\color[HTML]{A5A5A5} 72.6} & {\color[HTML]{A5A5A5} 74.6} & {\color[HTML]{A5A5A5} 65.9} & {\color[HTML]{A5A5A5} 47.3} \\
\rowcolor[HTML]{EAFAF1} 
Qwen2.5-VL+{\AdaReTaKe} & 72B & \textbf{65.0} & \textbf{73.5} & \textbf{78.1} & \textbf{67.0} & \textbf{53.3} \\ \hline
\end{tabular}
}

\caption{
Performance comparison on long video understanding. {\AdaReTaKe} achieves consistent gains when integrated into various MLLMs.
}
\label{tab:comparison_mllm}

\end{table*}

\subsection{Benchmarks and Implementations}

% VideoMME
\textbf{Video-MME.} Video Multi-Modal Evaluation~\cite{fu_videomme_2024} is a pioneering benchmark designed for evaluating video analysis, with diverse video types, and durations. It comprises 900 videos totaling 256 hours, with 2,700 manually labeled complex multiple-choice question-answer pairs across 30 subfields. It has three subsets of different durations: short ($<$ 2min), medium (4min $\sim$ 15min), and long (30min $\sim$ 60min).
\textbf{MLVU.} Multi-task Long Video Understanding Benchmark (MLVU)~\cite{zhou_mlvu_2024} has the widest range of video length ranging from 3 minutes to 2 hours. MLVU includes nine evaluation tasks including topic reasoning, anomaly recognition, video summarization, and plot question-answering. 
% The benchmark uses absolute accuracy to score multiple choice tasks, while Generation tasks are assessed based on alignment with ground truth, with GPT-4 assigning scores from 1 to 10. 
% Performance metrics include average scores for multiple-choice (M-Avg) and generation tasks (G-Avg) across tasks.
% \begin{itemize}
%     \item TR: Topic Reasoning
%     \item AR: Anomaly Recognition
%     \item VS: Video Summarization
%     \item NQA: Needle Question-Answering
%     \item ER: Ego Reasoning
%     \item PQA: Plot Question-Answering
%     \item SSC: Sub-Scene Captioning
%     \item AO: Action Order
%     \item AC: Action Count
% \end{itemize}
% LVbench
\textbf{LongVideoBench}~\cite{wu_longvideobench_2024} is a benchmark for long-context video understanding, consisting of videos up to one hour in length. It includes 3,763 videos with 6,678 annotated multiple-choice questions across 17 categories, focusing on referring reasoning that requires retrieving and analyzing detailed multimodal information from specific temporal segments. 
\textbf{LVBench.} LVBench~\cite{wang_lvbench_2024} is a comprehensive benchmark for long video understanding, with an average video length of 4,101 seconds—4 times longer than VideoMME~\cite{fu_videomme_2024} and 5 times longer than MLVU~\cite{zhou_mlvu_2024}. It includes 1,549 annotated multiple-choice question-answer pairs covering a wide range of tasks, including entity recognition, event understanding, key information retrieval, temporal grounding, and reasoning.

\noindent \textbf{Implementation Details.} We integrated {\AdaReTaKe} into various MLLMs, including LLaVA-Video-7B \cite{zhang_llava-video_2024}, QWen2VL-7B \cite{wang_qwen2vl_2024}, QWen2.5VL-7B, and QWen2.5VL-72B.
We densely sampled the video at 2 frames per second (fps), with a maximum of 2048 and 1024 frames for 7B and 72B models, respectively. 
For our main results (Section~\ref{sec:comp_analsysi}), we chose the maximal context length $C_{max}$ as 16K. 
In the ablation studies (Section\ref{sec:ablation_studies}), we reduced the maximum number of sampled frames to 1024 and the context length to 1K without specification. 
The evaluation is conducted using LMMs-Eval \cite{zhang_2024_lmmseval}.
% All experiments were conducted on 8 Ascend 910B NPUs.

% \subsection{Comparison Analysis}\label{sec:comp_analsysi}
\subsection{Main Results}\label{sec:comp_analsysi}

% \noindent \textbf{Baselines.} Our {\AdaReTaKe} can be integrated into various MLLMs. We incorporated LLaVA-Video-7B \cite{zhang_llava-video_2024}, QWen2VL-7B \cite{wang_qwen2vl_2024}, QWen2.5VL-7B, and QWen2.5VL-72B as our base MLLMs. 

% Results on VideoMME, MLVU, LVBench, LongVideoBench, (Hour1K)
% Please add the following required packages to your document preamble:
% \usepackage{multirow}
% \usepackage[table,xcdraw]{xcolor}
% Beamer presentation requires \usepackage{colortbl} instead of \usepackage[table,xcdraw]{xcolor}
\begin{table}[t]
\resizebox{0.5\textwidth}{!}{
\begin{tabular}{ccccc}
\hline
{\color[HTML]{363636} } & \multicolumn{2}{c}{{\color[HTML]{363636} VideoMME}} & {\color[HTML]{363636} MLVU} & {\color[HTML]{363636} LVBench} \\ \cline{2-5} 
\multirow{-2}{*}{{\color[HTML]{363636} Method}} & {\color[HTML]{363636} Long} & {\color[HTML]{363636} Overall} & val & val \\ \hline
FastV & 53.5 & 61.2 & 63.2 & 42.3 \\
FitPrune & 53.6 & 61.2 & 63.6 & 42.0 \\
LOOK-M & 53.6 & 61.0 & 63.8 & 42.6 \\
SparseVLM & 54.4 & 60.7 & 63.0 & 43.9 \\
PyramidDrop & 53.1 & 60.5 & 63.7 & 41.6 \\
VL-Cache & 53.2 & 61.3 & 64.5 & 42.4 \\ \hline
{\AdaReTaKe} & \textbf{55.1} & \textbf{62.2} & \textbf{65.6} & \textbf{44.8} \\ \hline
\end{tabular}
}

\caption{Comparison with other token compression methods for MLLMs. {\AdaReTaKe} outperforms existing approaches by employing a theoretically grounded budget distribution mechanism, in contrast to heuristic or suboptimal allocation strategies.}
\label{tab:comparison_token_comp}
\end{table}

% Please add the following required packages to your document preamble:
% \usepackage[table,xcdraw]{xcolor}
% Beamer presentation requires \usepackage{colortbl} instead of \usepackage[table,xcdraw]{xcolor}
\begin{table*}[t!]
    \centering

\resizebox{\textwidth}{!}{
\begin{tabular}{lcccccc}
\hline
\multicolumn{1}{c}{{\color[HTML]{363636} Model}} & {\color[HTML]{363636} Max frames} & {\color[HTML]{363636} Context length} & {\color[HTML]{363636} VideoMME-L} & {\color[HTML]{363636} MLVU} & {\color[HTML]{363636} LVBench} & $\Delta_{avg}$ \\ \hline
0 QWen2VL-7B & 128 & 9K & 51.2 & 63.5 & 40.1 & - \\
1 +token compression & 128 & 1K & 50.6 & 62.7 & 39.2 & -0.8 \\
2 +scale up frames & 1024 & 1K & 53.8 & 63.9 & 42.3 & +2.5 \\
3 +layer-wise allocation & 1024 & 1K & 54.3 & 64.6 & 43.5 & +0.8 \\
4 +temporal allocation & 1024 & 1K & 55.1 & 65.6 & 44.8 & +1.0 \\
5 +scale up context length & 1024 & 16K & 56.0 & 71.7 & 48.0 & +3.4 \\
6 +scale up frames & 2048 & 16K & 56.4 & 72.0 & 48.9 & +0.6 \\ \hline
\end{tabular}
}

\caption{Ablation study on different components in our method. Token compression enables richer information capture, optimized compression allocation improves efficiency, and extended context length significantly enhances performance.}
\label{tab:ablation}
\end{table*}
% Please add the following required packages to your document preamble:
% \usepackage{multirow}
% \usepackage[table,xcdraw]{xcolor}
% Beamer presentation requires \usepackage{colortbl} instead of \usepackage[table,xcdraw]{xcolor}
\begin{table*}[t]
\centering
\resizebox{0.9\textwidth}{!}{
\begin{tabular}{ccccccc}
\hline
{\color[HTML]{000000} } &  &  & \multicolumn{2}{c}{MLVU} & \multicolumn{2}{c}{LVBench} \\ \cline{4-7} 
\multirow{-2}{*}{{\color[HTML]{000000} Method}} & \multirow{-2}{*}{Max Frames} & \multirow{-2}{*}{Max Context Length} & NQA & AO & KIR & TG \\ \hline
LLaVA-Video-7B & 128 & 25K & 74.2 & 55.6 & 37.5 & 36.8 \\
\rowcolor[HTML]{EAFAF1} LLaVA-Video-7B+{\AdaReTaKe} & 1024 & 16K & 75.1 & 60.6 & 51.2 & 43.2 \\
Qwen2-VL-7B & 256 & 18K & 81.9 & 49.0 & 44.3 & 40.5 \\
\rowcolor[HTML]{EAFAF1} QWen2-VL-7B+{\AdaReTaKe} & 1024 & 16K & 82.7 & 60.2 & 52.9 & 42.7 \\ \hline
\end{tabular}
}

\caption{Ablation studies on MLVU and LVBench datasets, evaluating fine-grained perception capabilities across Needle QA (NQA), Action Order (AO), Action Count (AC), Key Information Retrieval (KIR), and Temporal Grounding (TG).}
\label{tab:paper_fine_tmp}
\end{table*}

% 引用 paper_comp 表格
% 要点1：{\AdaReTaKe}应用于各个baseline如LLaVA-Video和QWen2VL都有提升。说明了我们的方法的普适性。
% 要点2：对于VideoMME、MLVU和LVBench数据集，相对于baseline的平均提升分别为1.2%，2.8%，6.2%。考虑到LVBench有最长的平均长度，LVBench上提升幅度最大可能因为我们的方法有效压缩视觉token，使得MLLM看到了更长的视觉长度。
% 要点3：借助{\AdaReTaKe}，QWen2.5-VL 7B和72B分别在各自的模型规模取得SOTA结果。7B平均提升2.3%，72B平均提升1.5%。

% 5.2节可以分为两大段，第一段可以是”Comparision with SoTAs“，然后写整体达到的精度，把observation 2写进来，第二段可以写”Generalization for various MLLMs“，然后把observation 1和3写进来，分别是对多种模型和多种size的可适配性。 把5.2大段的题目改成Main Results。分成两段后一是贡献点清晰，二是读起来短段更好读
\noindent \textbf{Comparision with SoTAs.} We integrated {\AdaReTaKe} with various MLLMs and compared their results with existing long video understanding methods in \autoref{tab:comparison_mllm}. The average improvements on the VideoMME, MLVU, and LVBench datasets are 1.2\%, 2.8\%, and 6.2\%, respectively, with the most significant gains on LVBench. Given that LVBench has the longest average video duration (5x that of MLVU), we hypothesize that our method’s ability to effectively compress visual tokens enables MLLMs to process longer and more informative visual sequences, leading to greater improvements with longer video content.

\noindent \textbf{Generalization for various MLLMs.} When integrated with different MLLMs of different sizes, {\AdaReTaKe} brings consistent improvements, demonstrating its generality. With the help of {\AdaReTaKe}, both the 7B and 72B variants of QWen2.5-VL achieve state-of-the-art results within their respective model sizes. The 7B model sees an average improvement of 2.3\%, while the 72B model achieves a 1.5\% gain, demonstrating the scaling ability of our method into larger size.

% 把后面5.3节中的Comparison with other token compression methods段也提到5.2节中，作为第三段。这样布局更合适些
\noindent \textbf{Comparison with other token compression methods.} As shown in Table~\ref{tab:comparison_token_comp}, {\AdaReTaKe} demonstrates distinct advantages over existing MLLM token compression approaches.

Baseline methods FastV and FitPrune employ accumulated attention scores to evict tokens, while SparseVLM enhances this paradigm through partial token recycling. 

PyramidDrop, VL-Cache, and our method address compression ratio allocation. However, PyramidDrop's layer-wise monotonic budget allocation contradicts our layer importance observations in Section~\ref{sec:analysis}, leading to suboptimal performance. While VL-Cache improves through heuristic-based dynamic allocation, our method is theoretically grounded, achieving superior results.

% \noindent \textbf{Comparison with existing MLLMs.} We integrated {\AdaReTaKe} with various MLLMs and compared their results with existing long video understanding methods in \autoref{tab:comparison_mllm}. We gain several observations:
% 1) When integrated with different MLLMs of different sizes, {\AdaReTaKe} brings consistent improvements, demonstrating its generality.
% 2) The average improvements on the VideoMME, MLVU, and LVBench datasets are 1.2\%, 2.8\%, and 6.2\%, respectively, with the most significant gains on LVBench. Given that LVBench has the longest average video duration (5x that of MLVU), we hypothesize that our method’s ability to effectively compress visual tokens enables MLLMs to process longer and more informative visual sequences, leading to greater improvements with longer video content.
% 3) With the help of {\AdaReTaKe}, both the 7B and 72B variants of QWen2.5-VL achieve state-of-the-art results within their respective model sizes. The 7B model sees an average improvement of 2.3\%, while the 72B model achieves a 1.5\% gain, demonstrating the scaling ability of our method into larger size.

\subsection{Ablation Studies}\label{sec:ablation_studies}

\noindent \textbf{Ablation studies on temporal and layer-wise adaptive allocation.} 
% 引用 paper_ablation 表格
% 为了搞清楚模型方法提升的来源，我们做了消融实验
To identify the sources of performance improvements in our model, we conducted ablation studies, as summarized in Table~\ref{tab:ablation}.
% 其中序号零是baseline模型，#1是加入了token压缩的方法。序号2是单纯增加采样帧数，但是固定最大上下文长度。序号3是对不同的层应用不同的压缩率和pressure ratio。序号4是在不同的帧之间运用不同的压缩率compression ratio , #5 是增加上下文长度contact length。 
In the table, \#0 represents the baseline model. In \#1, we directly incorporates token compression into baseline model, and in \#2, we increase the number of sampled frames while keeping the maximum context length fixed. In \#3 and \#4 we applly varying compression ratios across different layers and different frames respectively. Finally, \#5 extends the context length.
%要点1：比较0、1和1、2，引入token compression会略微损失性能，但是同等context length下能够处理更多frames，捕捉更多有价值信息，能够显著提升性能，cover掉这部分损失（平均2.5% vs -1.1%）。
First, comparing rows 0,1 and 1,2 reveals that token compression introduces a slight performance drop (-0.8\% on average). However, it enables the model to process more frames within the same context length, capturing richer information and ultimately yielding a net performance gain (2.5\% on average versus -0.8\%).
% 要点2：比较2、3和3、4，我们在frame之间和layer之间分配compression ratio的方法，能够提升性能（平均1.1%和0.8%），验证了我们方法的有效性。
2) Comparing rows 2,3 and 3,4 shows that our strategy of distributing the compression ratio across frames and layers enhances performance (by 1.0\% and 0.8\% on average, respectively), confirming the effectiveness of our {\AdaReTaKe}.
% 要点3：比较4:5，通过进一步scaling context length到一般MLLM的上限\cite{shen_longvu_2024}，模型性能获得了显著的提升（平均3.6%）。
3) Comparing rows 4 and 5 demonstrates that scaling the context length to the typical upper limit of MLLMs \cite{shen_longvu_2024} further improves performance significantly, with an average gain of 3.4\%.

\noindent \textbf{Perception ability on temporal details.} 
% 引用 paper_fine_temp 表格
% 先说为了衡量细粒度感知能力，我们在MLVU和LVBench的以下分类做了消融实验。
% NQA， AO， AC， KIR， TG: Needle QA， Action Order，key information retrieval， temporal grounding
% 比较的基准模型是LLaVA-Video-7B和QWen2-VL-7B，分别在现存限制内采样尽可能多的frames（128和256）。
To assess the effectiveness of token compression algorithms in preserving critical temporal details, we conducted ablation studies on the MLVU and LVBench datasets, focusing on Needle QA, Action Order, Key Information Retrieval, and Temporal Grounding. We compared baseline models LLaVA-Video-7B and QWen2-VL-7B, maximizing frame sampling within their constraints (128 and 256 frames, respectively). Results are shown in Table~\ref{tab:paper_fine_tmp}. Our analysis reveals three key findings:
% 要点1：即使{\AdaReTaKe}进行了token压缩，最大采样帧数的增长依然带来了grounding能力提升（Needle QA， temporal grounding），并且并没有损失时间先后顺序感知能力（action order），说明{\AdaReTaKe}对模型性能的提升并没有牺牲细粒度时序能力，反而加强了它们。
1) Despite token compression via {\AdaReTaKe}, increasing the maximum sampled frames improved grounding abilities (Needle QA and Temporal Grounding) without compromising temporal order perception (Action Order). This indicates that {\AdaReTaKe} enhances model performance while strengthening fine-grained temporal capabilities. 
% 要点2：注意到MLVU的action order类别提升显著高于needle QA（平均8% vs 0.8%），我们分析这是由于我们的方法支持采样更多的frame，密集的frame采样对于动作理解帮助很大\cite{2023videochat}}, 
2) The improvement in MLVU's Action Order category was significantly higher than in Needle QA (8\% vs. 0.8\% on average). We attribute this to our method's ability to sample more frames through token compression, thus a denser frame sampling is enabled, which greatly enhances action understanding \cite{2023videochat}. 
% 要点3：注意到在LVBench数据集中，在基准相近的情况下，key information retrieval的提升远远大于temporal grounding（平均11.2% vs 4.3%）。我们猜测这是由于token compression增大了信息密度，使得key information retrieval这种综合性理解任务提升大于temporal grounding这种感知类任务。
3) In LVBench, under similar baselines, Key Information Retrieval demonstrated a significantly higher improvement compared to Temporal Grounding, with average gains of 11.2\% versus 4.3\%. We hypothesize that token compression enhances information density, which strengthens comprehensive understanding. We believe this can explain why Key Information Retrieval, a task requiring deeper comprehension, benefits more than perceptual tasks like Temporal Grounding in our results.

% \noindent \textbf{Comparison with other token compression methods.} 
% \TODO{@shiyu}
% 引用 paper_compare 表格
% 我们将{\AdaReTaKe}和其它适用于MLLM的token compression方法对比
% As shown in Table~\ref{tab:comparison_token_comp}, {\AdaReTaKe} demonstrates distinct advantages over existing MLLM token compression approaches.
% 要点1：FastV和FitPrune是基准方法，用accumulated attention score作为重要性分数进行token eviction以压缩序列。
% 要点2：Look-M和SparseVLM采用token recycle机制回收部分evicted token，有一定提升
% Baseline methods FastV and FitPrune employ accumulated attention scores to evict tokens, while SparseVLM enhances this paradigm through partial token recycling. 
% 要点3：PyramidDrop、VL-Cache和our {\AdaReTaKe}都在compression ratio分配机制。PyramidDrop在layer-wise单调分配预算，与我们Section~\ref{sec:analysis}的观察结果不符因而效果一般。VL-Cache采用启发规则的动态分配算法，而我们的方法有理论依据因而效果更好。
% PyramidDrop, VL-Cache, and our method address compression ratio allocation. However, PyramidDrop's layer-wise monotonic budget allocation contradicts our layer importance observations in Section~\ref{sec:analysis}, leading to suboptimal performance. While VL-Cache improves through heuristic-based dynamic allocation, our method is theoretically grounded, achieving superior results.%{\AdaReTaKe}'s theoretically grounded approach enables more principled compression decisions, ultimately achieving superior performance through its differentiable budget distribution mechanism.

% \subsection{Qualitative Analysis} 

\section{Conclusion}
% We present {\AdaReTaKe}, a training-free method for adaptive redundancy reduction in MLLMs. 
% By dynamically allocating compression ratios across frames and model layers, {\AdaReTaKe} performs better video token compression.
% Based on {\AdaReTaKe}, we are able to scale up more frames and perceive more valuable information with the original budget.
% Integrated into state-of-the-art MLLMs, it enables processing of up to 2,048 frames, outperforming existing methods on benchmarks like VideoMME, MLVU, LongVideoBench, and LVBench. 
% Our work highlights the importance of adaptive compression for advancing long video understanding, offering a practical solution to current computational limitations.

We introduce {\AdaReTaKe}, a training-free method for adaptive redundancy reduction in MLLMs. 
By dynamically allocating compression ratios across frames and model layers, {\AdaReTaKe} achieves more efficient video token compression. This allows us to scale up to more frames and extract valuable information within the same computational budget. 
Integrated into state-of-the-art MLLMs, {\AdaReTaKe} enables processing of up to 2,048 frames and outperforms existing methods on benchmarks such as VideoMME, MLVU, LongVideoBench, and LVBench by a large margin. 
% Our work underscores the importance of adaptive compression in advancing long video understanding and addresses current computational challenges effectively.

\section{Limitations}

While {\AdaReTaKe} can be integrated into most MLLMs, it may also inherit their inherent limitations, such as factual inaccuracies, biases, and hallucinations.

% Did you report the full text of instructions given to participants, including e.g., screenshots, disclaimers of any risks to participants or annotators, etc.?

\section{Acknowledgment}

This work was supported 
in part by the National Natural Science Foundation of China(Grant Nos. 62376069 and 62236003), 
in part by the Young Elite Scientists Sponsorship Program by CAST (Grant No. 2023QNRC001), 
in part by Guangdong Basic and Applied Basic Research Foundation (Grant No. 2024A1515012027), 
in part by Jiangsu Science and Technology Major Program (Grant No. BG2024041), 
and in part by the Shenzhen Science and Technology Program (Grant Nos. KQTD20240729102207002 and ZDSYS20230626091203008).

% \section*{Acknowledgments}

% This document has been adapted
% by Steven Bethard, Ryan Cotterell and Rui Yan
% from the instructions for earlier ACL and NAACL proceedings, including those for
% ACL 2019 by Douwe Kiela and Ivan Vuli\'{c},
% NAACL 2019 by Stephanie Lukin and Alla Roskovskaya,
% ACL 2018 by Shay Cohen, Kevin Gimpel, and Wei Lu,
% NAACL 2018 by Margaret Mitchell and Stephanie Lukin,
% Bib\TeX{} suggestions for (NA)ACL 2017/2018 from Jason Eisner,
% ACL 2017 by Dan Gildea and Min-Yen Kan,
% NAACL 2017 by Margaret Mitchell,
% ACL 2012 by Maggie Li and Michael White,
% ACL 2010 by Jing-Shin Chang and Philipp Koehn,
% ACL 2008 by Johanna D. Moore, Simone Teufel, James Allan, and Sadaoki Furui,
% ACL 2005 by Hwee Tou Ng and Kemal Oflazer,
% ACL 2002 by Eugene Charniak and Dekang Lin,
% and earlier ACL and EACL formats written by several people, including
% John Chen, Henry S. Thompson and Donald Walker.
% Additional elements were taken from the formatting instructions of the \emph{International Joint Conference on Artificial Intelligence} and the \emph{Conference on Computer Vision and Pattern Recognition}.

% Bibliography entries for the entire Anthology, followed by custom entries
%\bibliography{anthology,custom}
% Custom bibliography entries only
\newpage
\bibliography{main}

\appendix

\section{Proof of Token Compression Loss} \label{sec:proof_token_comp}

\subsection{Preliminaries}

LLMs are characterized by an autoregressive generation mode, where each step involves using the last token to predict the next token. For a formal representation of our approach, we denote $\mathbf{K}^{(l)}$,$\mathbf{V}^{(l)}\in\mathbb{R}^{(n-1)\times d},i=1,2,...,L$ as the video KV cache in and $\mathbf{x}^{(l)}\in\mathbb{R}^d$ as the last token used as input at the current time step, where $n-1$ is the prefilled sequence length, $L$ is the number of layers and $d$ is the hidden state dimension. Note that we ignore the number of attention heads for simplification.

During attention forward in the $l$-th layer, the input token is first mapped into its query, key, and value states:
\begin{empheq}[left=\empheqlbrace]{align}
    \mathbf{q}^{(l)} &= \mathbf{x}^{(l)}\mathbf{W}_Q^{(l)}, \\
    \mathbf{k}^{(l)} &= \mathbf{x}^{(l)}\mathbf{W}_K^{(l)}, \\
    \mathbf{v}^{(l)} &= \mathbf{x}^{(l)}\mathbf{W}_V^{(l)},
\end{empheq}
where $\mathbf{W}_Q^{(l)},\mathbf{W}_K^{(l)},\mathbf{W}_V^{(l)}\in\mathbb{R}^{d\times d}$ are transformation matrices.
Then, the previous KV Cache is updated:
\begin{empheq}[left=\empheqlbrace]{align}
    \mathbf{K}^{(l)} &\gets \left[\mathbf{K}^{(l)} || \mathbf{k}^{(l)}\right], \\
    \mathbf{V}^{(l)} &\gets \left[\mathbf{V}^{(l)} || \mathbf{v}^{(l)}\right],
\end{empheq}
where $||$ denotes vector concatenation.
Finally, the output of the $l$-th layer $\mathbf{y}^{(l)}\in\mathbb{R}^d$ is computed:
\begin{empheq}[left=\empheqlbrace]{align}
    \mathbf{y}^{(l)} &= \mathbf{A}^{(l)}\mathbf{V}^{(l)}\mathbf{O}_V^{(l)}, \\
    \mathbf{A}^{(l)} &= \textmd{Softmax}\left(\mathbf{q}^{(l)}(\mathbf{K}^{(l)})^T\right), \label{eq:ori_attn_matrix}
\end{empheq}
where $\mathbf{O}_V^{(l)}$ is the transformation matrics, $\mathbf{A}^{(l)}$ is the attention weights.

\subsection{Attention Output after Compression}

We define \textbf{token compression choice} $\mathbf{I}^{(l)}\in\{0,1\}^{n}$ to represent which token to preserve during compression, where the value of its $i$-th element satisfies:
\begin{equation}\label{eq:definition_I}
    I_i^{(l)} = 
    \begin{cases} 
    1, & \text{if } K_i^{(l)} \text{ and } V_i^{(l)} \text{ are retained}, \\
    0, & \text{if } K_i^{(l)} \text{ and } V_i^{(l)} \text{ are dropped},
    \end{cases}
\end{equation}
where $K_i^{(l)}$ and $V_i^{(l)}$ is the $i$-th element in $\textbf{K}^{(l)}$ and $\textbf{V}^{(l)}$, respectively.
Based on this, the output of the $l$-th layer after compression $\hat{\mathbf{y}}^{(l)}\in\mathbb{R}^d$ can be represented in a concise format according to the lemma below. Note that this lemma borrows insights from Theorem 1 in AdaKV \cite{feng_ada-kv_2024}, we extend it to the multi-layer scenario.

\begin{lemma}\label{lemma:output_after_comp}
Given token compression choice $\mathbf{I}^{(l)}$, the output of the $l$-th layer after compression $\hat{y}^{(l)}$ can be rewritten as:
\begin{equation}
    \hat{\mathbf{y}}^{(l)} =  \frac{\hat{\textbf{A}}^{(l)} \odot \mathbf{I}^{(l)}}{||\hat{\textbf{A}}^{(l)} \odot \mathbf{I}^{(l)}||_1} \textbf{V}^{(l)} \textbf{W}_O^{(l)},
\end{equation}
where $\hat{\textbf{A}}^{(l)}=\hat{\mathbf{q}}^{(l)}(\mathbf{K}^{(l)})^T$, $\hat{\mathbf{q}}^{(l)}$ is the input of the $l$-th attention layer after KV cache in the $(l-1)$-th layer is compressed.
\end{lemma}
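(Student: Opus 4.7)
The plan is to prove the lemma by unfolding what compression does at layer $l$ and then invoking the standard masked-softmax identity. Since the multi-layer extension over AdaKV's single-layer result amounts to noting that the input to layer $l$ changes from $\mathbf{x}^{(l)}$ to $\hat{\mathbf{x}}^{(l)}$ whenever earlier layers are compressed, almost all of the work is concentrated in a one-line algebraic identity.

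First I would write out what compression physically means: the mask $\mathbf{I}^{(l)}$ selects a subset of rows of the stored cache, producing submatrices $\mathbf{K}_{I}^{(l)}$ and $\mathbf{V}_{I}^{(l)}$, so by definition
\[
\hat{\mathbf{y}}^{(l)} \;=\; \mathrm{Softmax}\!\left(\hat{\mathbf{q}}^{(l)}\bigl(\mathbf{K}_{I}^{(l)}\bigr)^{T}\right)\mathbf{V}_{I}^{(l)}\mathbf{W}_{O}^{(l)},
\]
with $\hat{\mathbf{q}}^{(l)}$ being the query induced by the compressed upstream activations, exactly as the lemma stipulates. Next I would invoke the subset-softmax identity: for any logits $\mathbf{z}\in\mathbb{R}^{n}$ and mask $\mathbf{I}\in\{0,1\}^{n}$,
\[
\frac{e^{z_{j}}}{\sum_{k\colon I_{k}=1}e^{z_{k}}}
\;=\;
\frac{\mathrm{Softmax}(\mathbf{z})_{j}\,I_{j}}{\|\mathrm{Softmax}(\mathbf{z})\odot\mathbf{I}\|_{1}}
\qquad\text{whenever } I_{j}=1,
\]
which is immediate after dividing numerator and denominator of the left side by $\sum_{k}e^{z_{k}}$. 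Applying this row-wise to $\mathbf{z}=\hat{\mathbf{q}}^{(l)}(\mathbf{K}^{(l)})^{T}$ and observing that the masked-out entries contribute nothing when contracted against the full $\mathbf{V}^{(l)}$ converts the restricted form above into the lemma's expression, after substituting the notation $\hat{\mathbf{A}}^{(l)}$ for the softmax of $\hat{\mathbf{q}}^{(l)}(\mathbf{K}^{(l)})^{T}$.

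The main obstacle is purely notational. As written, the lemma defines $\hat{\mathbf{A}}^{(l)}=\hat{\mathbf{q}}^{(l)}(\mathbf{K}^{(l)})^{T}$ without an explicit softmax, yet L1-renormalizing raw logits would not reproduce the restricted softmax distribution. I would resolve this at the start of the proof by stating that, consistent with the convention of Eqn.~(\ref{eq:ori_attn_matrix}) where $\mathbf{A}^{(l)}$ denotes the softmax-applied attention weights, $\hat{\mathbf{A}}^{(l)}$ is understood analogously. With this clarification the derivation becomes a direct substitution, and the multi-layer extension is immediate since the identity at layer $l$ depends on upstream compressions only through $\hat{\mathbf{q}}^{(l)}$, which the statement already allows to absorb them.
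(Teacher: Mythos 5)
Your proof is correct and follows essentially the same route as the paper's: both reduce to the observation that the softmax restricted to the retained indices equals the $L_1$-renormalized, mask-multiplied full softmax (the paper formalizes the restriction via $-\infty$ masking of dropped logits, which is equivalent to your subset-softmax identity obtained by dividing through by the full normalizer). Your clarification that $\hat{\mathbf{A}}^{(l)}$ must denote the softmax of $\hat{\mathbf{q}}^{(l)}(\mathbf{K}^{(l)})^T$ rather than the raw logits is apt; the paper's own proof silently makes the same reinterpretation.
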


\begin{proof}
Considering that Softmax function is:
\begin{equation}
    \text{Softmax}(x)_j = \frac{\exp(x_j)}{\sum_j \exp(x+j)},
\end{equation}
we can use $-\infty$ to represent a token that is dropped during softmax calculation. Thus, the attention weight in the $l$-th layer after token compression is:
\begin{equation}\label{eq:attn_weights_after_comp_ori}
    \tilde{\textbf{A}}^{(l)} = \text{Softmax}\left(-\infty \odot (\mathbf{1} - \mathbf{I}^{(l)}) + \hat{\textbf{s}}^{(l)}\right),
\end{equation}
where $\hat{\textbf{s}}^{(l)}$ is the attention logits before compression:
\begin{equation}
    \hat{\textbf{s}}^{(l)} = \hat{\mathbf{q}}^{(l)}(\mathbf{K}^{(l)})^T.
\end{equation}
The $i$-th element of $\tilde{\textbf{A}}^{(l)}$ is:
\begin{align}
    \tilde{A}_i^{(l)} &= \frac{\exp(\hat{s}_i^{(l)} - \infty \odot (1 - I_i^{(l)}))}{\sum_j \exp(\hat{s}_j^{(l)} - \infty \odot (1 - I_j^{(l)}))}, \\
                &= \frac{I_i^{(l)} \exp(\hat{s}_i^{(l)})}{\sum_j I_j^{(l)} \exp(\hat{s}_j^{(l)})}, \\
                &= \frac{I_i^{(l)} \exp(\hat{s}_i^{(l)})}{\sum_j \exp(\hat{s}_j^{(l)})} \frac{\sum_j \exp(\hat{s}_j^{(l)})}{\sum_j I_j^{(l)} \exp(\hat{s}_j^{(l)})},
\end{align}
where $\hat{s}_i^{(l)}$ is the $i$-th element in $\hat{\textbf{s}}^{(l)}$.

Denote $\hat{\textbf{A}}^{(l)}=\hat{\mathbf{q}}^{(l)}(\mathbf{K}^{(l)})^T$, we can get further simplify $\tilde{A}_i^{(l)}$ above:
\begin{align}
    \tilde{A}_i^{(l)} &= I_i^{(l)} \hat{A}_i^{(l)} \frac{\sum_j \exp(\hat{s}_j^{(l)})}{\sum_j I_j^{(l)} \exp(\hat{s}_j^{(l)})}, \\
                &= \frac{I_i^{(l)} \hat{A}_i^{(l)}}{||\hat{\textbf{A}}^{(l)} \odot \textbf{I}^{(l)}||_1},
\end{align}
where $\hat{A}_i^{(l)}$ is the $i$-th element in $\hat{\textbf{A}}^{(l)}$.

Then we can obtain the simplified form of the attention weight after token compression in Eqn.~(\ref{eq:attn_weights_after_comp_ori}):
\begin{equation}
    \tilde{\textbf{A}}^{(l)} = \frac{\hat{\textbf{A}}^{(l)} \odot \mathbf{I}^{(l)}}{||\hat{\textbf{A}}^{(l)} \odot \mathbf{I}^{(l)}||_1}.
\end{equation}

Thus:
\begin{equation}
    \hat{\textbf{y}}^{(l)} = \frac{\hat{\textbf{A}}^{(l)} \odot \mathbf{I}^{(l)}}{||\hat{\textbf{A}}^{(l)} \odot \mathbf{I}^{(l)}||_1} \textbf{V}^{(l)} \textbf{W}_O^{(l)}.
\end{equation}
\end{proof}

\subsection{Upper Bound of Token Compression Loss}

We first study the compression error at each layer and then the compression error at the last layer, i.e., the token compression loss.

To measure the information loss during token compression, we study \textbf{layer compression error}, i.e., the $L_1$ distance of the output in the $l$-th layer before and after compression $\mathcal{D}^{(l)}$:
\begin{equation}\label{eq:l1_dis}
    \mathcal{D}^{(l)} = ||\mathbf{y}^{(l)} - \hat{\mathbf{y}}^{(l)}||_1.
\end{equation}
The layer compression error in the 1st layer can be bounded by the following lemma:
\begin{lemma}\label{eq:lemma_l1_dis_layer1}
Given token compression choice $\mathbf{I}^{(1)}$, the 1st layer compression error $\mathcal{D}^{(1)}$ can be bounded by:
\begin{equation}
    \mathcal{D}^{(1)} \leq 2C^{(1)} - 2C^{(1)} \sum_{i=1}^n I_i^{(1)} A_i^{(1)},
\end{equation}
where $C^{(1)} = \|\textbf{V}^{(1)} \textbf{W}_O^{(1)}\|_\infty$ is a constant.
\end{lemma}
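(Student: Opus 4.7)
The plan is to start from the closed-form expression for $\hat{\mathbf{y}}^{(1)}$ given by Lemma~\ref{lemma:output_after_comp}, exploit the fact that the first layer sees no upstream compression, then split the $L_1$ distance into a vector--matrix product and bound the two factors separately.

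First I would observe that in the $l=1$ layer the input $\hat{\mathbf{q}}^{(1)}$ coincides with $\mathbf{q}^{(1)}$, since no prior KV cache has been modified. Consequently $\hat{\mathbf{A}}^{(1)} = \mathbf{A}^{(1)}$, and Lemma~\ref{lemma:output_after_comp} together with the definition of $\mathbf{y}^{(1)}$ yields
\begin{equation}
\mathbf{y}^{(1)} - \hat{\mathbf{y}}^{(1)}
= \Bigl(\mathbf{A}^{(1)} - \tfrac{\mathbf{A}^{(1)} \odot \mathbf{I}^{(1)}}{\|\mathbf{A}^{(1)} \odot \mathbf{I}^{(1)}\|_1}\Bigr)\,
\mathbf{V}^{(1)} \mathbf{W}_O^{(1)} .
\end{equation}
Next I would apply the elementary bound $\|\mathbf{x} M\|_1 \le \|\mathbf{x}\|_1\, \|M\|_\infty$ (viewing $\|\cdot\|_\infty$ as the maximum absolute row-sum norm), which reduces $\mathcal{D}^{(1)}$ to $C^{(1)}$ times the $L_1$ norm of the bracketed row vector.

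The core calculation is then to evaluate that $L_1$ norm exactly. Letting $Z = \sum_i I_i^{(1)} A_i^{(1)} \in (0,1]$ and using that $\mathbf{A}^{(1)}$ is a softmax output with $\sum_i A_i^{(1)} = 1$, I would split the sum by the value of $I_i^{(1)}$: the kept indices contribute $A_i^{(1)} \lvert 1 - 1/Z \rvert = A_i^{(1)} (1-Z)/Z$, whose total is $(1-Z)$, while the dropped indices contribute $\sum_{i : I_i^{(1)} = 0} A_i^{(1)} = 1 - Z$. Adding these gives $2(1-Z)$, which is exactly $2\bigl(1 - \sum_i I_i^{(1)} A_i^{(1)}\bigr)$.

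Combining the two bounds yields $\mathcal{D}^{(1)} \le 2 C^{(1)}\bigl(1 - \sum_i I_i^{(1)} A_i^{(1)}\bigr)$, which is the claim. I do not expect any genuine obstacle: the only step that requires care is verifying that $Z \le 1$ so that the signs behave as expected inside the absolute values, and keeping the norm conventions consistent between the operator norm $\|\cdot\|_\infty$ and the $L_1$ distance on vectors. Everything else follows directly from Lemma~\ref{lemma:output_after_comp} and the softmax normalization of $\mathbf{A}^{(1)}$.
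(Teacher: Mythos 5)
Your proposal is correct and follows essentially the same route as the paper's proof: invoke Lemma~\ref{lemma:output_after_comp} with $\hat{\mathbf{A}}^{(1)}=\mathbf{A}^{(1)}$, pull out $C^{(1)}=\|\mathbf{V}^{(1)}\mathbf{W}_O^{(1)}\|_\infty$ via the H\"older/operator-norm bound, and split the remaining $L_1$ sum over kept and dropped indices (your $Z$ is the paper's $F^{(1)}$) to obtain $2C^{(1)}(1-\sum_i I_i^{(1)}A_i^{(1)})$. No gaps.
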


\begin{proof}
    By expanding 1st layer compression error Eqn.~(\ref{eq:l1_dis}) with Lemma~\ref{lemma:output_after_comp} and using $\hat{\mathbf{A}}^{(1)}=\mathbf{A}^{(1)}$, we can obtain:
    \begin{equation}
        \mathcal{D}^{(1)} = \\
        \left\|(\mathbf{1} - \frac{\mathbf{I}^{(1)}}{||\mathbf{A}^{(1)} \odot \mathbf{I}^{(1)}||_1}) \odot \mathbf{A}^{(1)} \textbf{V}^{(1)} \textbf{W}_O^{(1)} \right\|_1,
    \end{equation}
    \begin{equation}
        \leq 
        \left\| (\mathbf{1} - \frac{\mathbf{I}^{(1)}}{||\mathbf{A}^{(1)} \odot \mathbf{I}^{(1)}||_1}) \odot \textbf{A}^{(l)} \right\|_1 
        \left\|\textbf{V}^{(1)} \textbf{W}_O^{(1)}\right\|_\infty,
    \end{equation}
    \begin{equation}
        \leq C^{(1)}  \left\| (\mathbf{1} - \frac{\mathbf{I}^{(1)}}{||\mathbf{A}^{(1)} \odot \mathbf{I}^{(1)}||_1}) \odot \textbf{A}^{(l)} \right\|_1.
    \end{equation}
    Denote $ C^{(1)} = ||\textbf{V}^{(l)} \textbf{W}_O^{(l)}||_\infty$, and we apply Hölder's inequality to derive the first inequality.
    
    Denote $F^{(1)} = ||\mathbf{A}^{(1)} \odot \mathbf{I}^{(1)}||_1 \in (0, 1] $, and consider the definition of $I_i^{(l)}$ in Eqn.~(\ref{eq:definition_I}), we can further simplify the compression error by expanding $\mathbf{A}^{(1)}$:
    \begin{equation}
        \mathcal{D}^{(1)} 
        \leq 
        C^{(1)}  \left\| (\mathbf{1} - \frac{\mathbf{I}^{(1)}}{||\mathbf{A}^{(1)} \odot \mathbf{I}^{(1)}||_1}) \odot \textbf{A}^{(l)} \right\|_1,
    \end{equation}
    \begin{equation}
        = C^{(1)} \sum_{i=1}^n \frac{|F^{(1)} - I_i^{(1)}| A_i^{(1)}}{F^{(1)}},
    \end{equation}
    \begin{equation}
        = C^{(1)} \sum_{\substack{i=1 \\ I_i^{(1)}=0}}^n A_i^{(1)} + C^{(1)} \sum_{\substack{i=1 \\ I_i^{(1)}=1}}^n \frac{(1 - F^{(1)}) A_i^{(1)}}{F^{(1)}},
    \end{equation}
    \begin{equation}
        = C^{(1)} \sum_{\substack{i=1 \\ I_i^{(1)}=0}}^n A_i^{(1)} + 
        C^{(1)} \left( \frac{\sum_{\substack{i=1 \\ I_i^{(1)}=1}}^n A_i^{(1)}}{F^{(1)}} - \sum_{\substack{i=1 \\ I_i^{(1)}=1}}^n A_i^{(1)} \right).
    \end{equation}
    
    Considering that there exists the following relationship between $I_i^{(1)}$ and $A_i^{(1)}$:
    \begin{equation}
        F^{(1)} = \sum_{i=1}^n A_i^{(1)}I_i^{(1)} = \sum_{\substack{i=1 \\ I_i^{(1)}=1}}^n A_i^{(1)}.
    \end{equation}
    The compression error can be further simplified:
    \begin{equation}
        \mathcal{D}^{(1)} 
        \leq 
        C^{(1)} \sum_{\substack{i=1 \\ I_i^{(1)}=0}}^n A_i^{(1)} + 
        C^{(1)} \left( 1 - \sum_{\substack{i=1 \\ I_i^{(1)}=1}}^n A_i^{(1)} \right),
    \end{equation}    
    \begin{equation}
        = 2C^{(1)} \sum_{\substack{i=1 \\ I_i^{(1)}=0}}^n A_i^{(1)},
    \end{equation}
    \begin{equation}
        = 2C^{(1)} \sum_{i=1}^ n (1 - I_i^{(1)}) A_i^{(1)},
    \end{equation}
    \begin{equation}
        = 2C^{(1)} - 2C^{(1)} \sum_{j=1}^n I_i^{(1)} A_i^{(1)}.
    \end{equation}
\end{proof}

To figure out how layer compression error propagates through layers, we have the following lemma:
\begin{lemma}\label{lemma:error_propagation}
Let $C^{(l)} = \|\textbf{V}^{(l)} \textbf{W}_O^{(l)}\|_\infty$ and assume $4C^{(l)}>1$ for $l=1,2,\dots,L$. In the $l$-th layer, given token compression choice $\mathbf{I}^{(l)}$, the layer compression error $\mathcal{D}^{(l)}$ is bounded by $\epsilon^{(l)}$:
\begin{equation}
    \mathcal{D}^{(l)} \leq \epsilon^{(l)} = 2C^{(l)} - 2C^{(l)} \prod_{k=1}^{(l)} \sum_{i=1}^n I_i^{(k)} A_i^{(k)}.
\end{equation}
\end{lemma}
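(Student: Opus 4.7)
The plan is to proceed by induction on the layer index $l$. The base case $l=1$ is exactly Lemma~\ref{eq:lemma_l1_dis_layer1}: since no compression has yet occurred, we have $\hat{\mathbf{q}}^{(1)}=\mathbf{q}^{(1)}$ and hence $\hat{\mathbf{A}}^{(1)}=\mathbf{A}^{(1)}$, so $\mathcal{D}^{(1)} \le 2C^{(1)} - 2C^{(1)}\sum_{i} I_i^{(1)} A_i^{(1)}$, which matches the claimed $\epsilon^{(1)}$ with a single factor in the product. Throughout, I abbreviate $F^{(k)} := \sum_i I_i^{(k)} A_i^{(k)} \in (0,1]$, so the target bound is $\mathcal{D}^{(l)} \le 2C^{(l)}\bigl(1 - \prod_{k=1}^{l} F^{(k)}\bigr)$.

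For the inductive step, assume $\mathcal{D}^{(l-1)} \le \epsilon^{(l-1)}$. The error at layer $l$ has two clearly separated sources: (i) a \emph{propagated} error, because the query $\hat{\mathbf{q}}^{(l)}$ entering layer $l$ already differs from the uncompressed query $\mathbf{q}^{(l)}$ due to compression applied at layers $1,\ldots,l-1$, and (ii) a \emph{fresh} error from applying the compression mask $\mathbf{I}^{(l)}$ at the current layer. I would introduce an intermediate quantity $\bar{\mathbf{y}}^{(l)} := \hat{\mathbf{A}}^{(l)} \mathbf{V}^{(l)} \mathbf{W}_O^{(l)}$ that uses the perturbed query but skips the current-layer compression, and apply the triangle inequality:
\begin{equation}
    \mathcal{D}^{(l)} \;\le\; \bigl\|\mathbf{y}^{(l)} - \bar{\mathbf{y}}^{(l)}\bigr\|_1 \;+\; \bigl\|\bar{\mathbf{y}}^{(l)} - \hat{\mathbf{y}}^{(l)}\bigr\|_1.
\end{equation}

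The fresh term is handled by repeating the computation in Lemma~\ref{eq:lemma_l1_dis_layer1} verbatim but with $\hat{\mathbf{A}}^{(l)}$ in place of $\mathbf{A}^{(l)}$: using Hölder's inequality against $\|\mathbf{V}^{(l)}\mathbf{W}_O^{(l)}\|_\infty = C^{(l)}$ and the identity $\|\hat{\mathbf{A}}^{(l)} \odot \mathbf{I}^{(l)}\|_1 = \sum_{i:\,I_i^{(l)}=1} \hat{A}_i^{(l)}$, the same kept/dropped split yields a bound of the form $2C^{(l)}(1 - \hat{F}^{(l)})$ with $\hat{F}^{(l)} = \sum_i I_i^{(l)} \hat{A}_i^{(l)}$. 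The propagated term is bounded by applying Hölder once more to peel off $C^{(l)}$ and then invoking the inductive hypothesis: the change $\hat{\mathbf{q}}^{(l)} - \mathbf{q}^{(l)}$ is controlled by $\mathcal{D}^{(l-1)}$, and the change propagates linearly into $\hat{\mathbf{A}}^{(l)} - \mathbf{A}^{(l)}$ after softmax is bounded using the fact that both are probability vectors.

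The main obstacle is reassembling the two pieces into the multiplicative product $\prod_{k=1}^l F^{(k)}$ rather than an additive error expression. Concretely, one must (a) replace $\hat{F}^{(l)}$ by $F^{(l)}$ up to a correction controlled by $\mathcal{D}^{(l-1)}$, and (b) verify that this correction combines with $2C^{(l)}(1-F^{(l)})$ and with $F^{(l)}\epsilon^{(l-1)}$ so that the cross terms telescope into $1 - F^{(l)}\prod_{k=1}^{l-1} F^{(k)}$. This is precisely where the standing assumption $4C^{(l)} > 1$ is used: it guarantees that the lower-order correction from the softmax perturbation can be absorbed into $2C^{(l)}$ without destroying the clean factored form. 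Once this telescoping is carried out, the inductive step closes and the lemma follows.
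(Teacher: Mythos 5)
Your overall strategy --- induction on $l$ with base case Lemma~\ref{eq:lemma_l1_dis_layer1}, a perturbation analysis of the query entering layer $l$, and H\"older's inequality to peel off $C^{(l)}$ --- is the same as the paper's. Your triangle-inequality split through the intermediate $\bar{\mathbf{y}}^{(l)}=\hat{\mathbf{A}}^{(l)}\mathbf{V}^{(l)}\mathbf{W}_O^{(l)}$ is in fact \emph{more} careful than the paper, which simply reruns the Lemma~\ref{eq:lemma_l1_dis_layer1} algebra with $\hat{\mathbf{A}}^{(l)}$ in place of $\mathbf{A}^{(l)}$ and never separately accounts for the propagated term $\|\mathbf{y}^{(l)}-\bar{\mathbf{y}}^{(l)}\|_1$; the paper instead routes the entire perturbation through the single multiplicative lower bound $\hat{A}_i^{(l)}\ge A_i^{(l)}\bigl(1-2\epsilon^{(l-1)}\bigr)$, derived from $\|\delta\mathbf{q}^{(l)}\|_1\le\epsilon^{(l-1)}$ and the softmax form of $\hat{A}_i^{(l)}$.

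The genuine gap is that the step you yourself flag as ``the main obstacle'' --- reassembling the additive pieces into the product $\prod_{k\le l}F^{(k)}$ --- is where the entire content of the lemma lives, and you do not execute it; you only assert that $4C^{(l)}>1$ ``guarantees'' the absorption. Written out, the absorption you need (even ignoring your extra propagated term) is $1-2\epsilon^{(l-1)}\ge\prod_{k=1}^{l-1}F^{(k)}$; substituting $\epsilon^{(l-1)}=2C^{(l-1)}(1-P)$ with $P=\prod_{k=1}^{l-1}F^{(k)}$, this reduces to $(1-P)\bigl(1-4C^{(l-1)}\bigr)\ge 0$, which holds only when $4C^{(l-1)}\le 1$ --- so the stated hypothesis cannot be invoked as a blanket license to absorb corrections, and the sign bookkeeping must be done explicitly. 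Your propagated term makes matters strictly harder: $\|\mathbf{y}^{(l)}-\bar{\mathbf{y}}^{(l)}\|_1\le C^{(l)}\|\mathbf{A}^{(l)}-\hat{\mathbf{A}}^{(l)}\|_1$ contributes an additional additive error of order $C^{(l)}\epsilon^{(l-1)}$, and matching it against the slack $2C^{(l)}F^{(l)}(1-P)$ available in the target bound forces a condition of the form $4C^{(l-1)}\bigl(1+F^{(l)}\bigr)\le F^{(l)}$, i.e.\ a smallness condition on $C^{(l-1)}$ rather than a largeness one. As it stands, the telescoping you describe does not close; to complete the proof you must either carry out the absorption explicitly (tracking the direction of every inequality and the precise condition on $C^{(l)}$ it requires) or follow the paper in bounding $\sum_i I_i^{(l)}\hat{A}_i^{(l)}$ directly --- and in either case confront, rather than assume away, the role of the constant $C^{(l)}$.
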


\begin{proof}

Following the same logic with proof of Lemma~\ref{eq:lemma_l1_dis_layer1}, the $l$-th layer compression error is bounded by:
\begin{equation}\label{eq:l1_dis_perturbed}
    \mathcal{D}^{(l)} \leq 2C^{(l)} - 2C^{(l)} \sum_{i=1}^n I_i^{(l)} \hat{A}_i^{(l)}.
\end{equation}

Let $\delta\mathbf{q}^{(l)}\in\mathbb{R}^n$ be the perturbation of the input of the $l$-th attention layer after video compression:
\begin{equation}
    \hat{\mathbf{q}}^{(l)}=(\mathbf{q}^{(l)}+\delta\mathbf{q}^{(l)}).
\end{equation}
Considering that $\hat{\mathbf{q}}^{(l)}$ is obtained by applying an MLP followed by a linear layer to $\hat{\mathbf{y}}^{(l-1)}$ and $||\hat{\mathbf{y}}^{(l-1)}-\mathbf{y}^{(l-1)}||_1\le\epsilon^{(l-1)}$, the perturbation $\delta\mathbf{q}^{(l)}$ is also bound by:
\begin{equation}\label{eq:q_bound}
    ||\delta\mathbf{q}^{(l)}||_1 \le \epsilon^{(l-1)}.
\end{equation}

Then, $\hat{\textbf{A}}^{(l)}$ can be calculated with:
\begin{equation}
    \hat{\textbf{A}}^{(l)}=\hat{\mathbf{q}}^{(l)}(\mathbf{K}^{(l)})^T=(\mathbf{q}^{(l)}+\delta\mathbf{q}^{(l)})(\mathbf{K}^{(l)})^T.
\end{equation}
The $i$-th element in $\hat{\textbf{A}}^{(l)}$ is:
\begin{equation}\label{eq:A_hat_softmax_expand}
    \hat{A}_i^{(l)} = \frac{\exp\left[
    \left(\mathbf{q}^{(l)} + \delta\mathbf{q}^{(l)} \right) \left(\mathbf{K}_i^{(l)}\right)^T
    \right]}
    {\sum_{j=1}^{n} \exp\left[
        \left(\mathbf{q}^{(l)} + \delta\mathbf{q}^{(l)}\right) \left(\mathbf{K}_j^{(l)}\right)^T
    \right]},
\end{equation}
where $\mathbf{K}_i^{(l)}$ is the $i$-th row vector in $\mathbf{K}^{(l)}$.

Using the Hölder’s inequality, we can bound the perturbation term:
\begin{equation}
    \left| \delta\mathbf{q}^{(l)} \left(\mathbf{K}_j^{(l)}\right)^T
    \right| \le
    ||\delta\mathbf{q}^{(l)}||_1
    ||\mathbf{K}_j^{(l)}||_{\infty}.
\end{equation}
Based on Eqn.~(\ref{eq:q_bound}), we obtain:
\begin{equation}
\left\{
\begin{aligned}
    \delta \mathbf{q}^{(l)} \left( \mathbf{K}_j^{(l)} \right)^T &\ge - \epsilon^{(l-1)} ||\mathbf{K}_j^{(l)}||_{\infty}, \\
    \delta \mathbf{q}^{(l)} \left( \mathbf{K}_j^{(l)} \right)^T &\le \epsilon^{(l-1)} ||\mathbf{K}_j^{(l)}||_{\infty}.
\end{aligned}
\right.
\end{equation}
And the exponent in Eqn.~(\ref{eq:A_hat_softmax_expand}) satisfies:
\begin{equation}
    \begin{aligned}
    \mathbf{q}^{(l)} \left( \mathbf{K}_j^{(l)} \right)^T 
    - 
    \epsilon^{(l-1)} ||\mathbf{K}_j^{(l)}||_{\infty}
    \le \\
    \left(\mathbf{q}^{(l)} + \delta\mathbf{q}^{(l)} \right)
    \left( \mathbf{K}_j^{(l)} \right)^T
    \le \\
    \mathbf{q}^{(l)} \left( \mathbf{K}_j^{(l)} \right)^T 
    + 
    \epsilon^{(l-1)} ||\mathbf{K}_j^{(l)}||_{\infty}.
    \end{aligned}
\end{equation}

Therefore, the $i$-th element in Eqn.~(\ref{eq:A_hat_softmax_expand}) is bounded by:
\begin{equation}
    \hat{A}_i^{(l)} = \frac{\exp\left[
    \left(\mathbf{q}^{(l)} + \delta\mathbf{q}^{(l)} \right) \left(\mathbf{K}_i^{(l)}\right)^T
    \right]}
    {\sum_{j=1}^{n} \exp\left[
        \left(\mathbf{q}^{(l)} + \delta\mathbf{q}^{(l)}\right) \left(\mathbf{K}_j^{(l)}\right)^T
    \right]},
\end{equation}
\begin{equation}
    \ge
    \frac{\exp\left[
    \mathbf{q}^{(l)} \left( \mathbf{K}_i^{(l)} \right)^T 
    - 
    \epsilon^{(l-1)} ||\mathbf{K}_i^{(l)}||_{\infty}
    \right]}
    {\sum_{j=1}^{n} \exp\left[
        \mathbf{q}^{(l)} \left( \mathbf{K}_j^{(l)} \right)^T 
    + 
    \epsilon^{(l-1)} ||\mathbf{K}_j^{(l)}||_{\infty}
    \right]},
\end{equation}
\begin{equation}
    \ge
    A_i^{(l)}
    \frac{\exp\left[
    -\epsilon^{(l-1)} \max\left\{||\mathbf{K}_i^{(l)}||_{\infty}
    \right\}\right]}
    {\exp\left[
    \epsilon^{(l-1)} \max\left\{||\mathbf{K}_i^{(l)}||_{\infty}
    \right\}
    \right]},
\end{equation}
\begin{equation}
    =
    A_i^{(l)}
    \exp\left[
    -2\epsilon^{(l-1)} \max\left\{||\mathbf{K}_i^{(l)}||_{\infty}
    \right\}\right],
\end{equation}
\begin{equation}
    \ge
    A_i^{(l)}
    \exp\left[
    -2\epsilon^{(l-1)}
    \right],
\end{equation}
\begin{equation}\label{eq:A_perturbed_bound}
    \ge
    A_i^{(l)}
    \left[
    -2\epsilon^{(l-1)} + 1
    \right].
\end{equation}

We do not know the exact form of $\epsilon^{(l-1)}$ by now. But we can use mathematical induction to prove it. The base case $l-1=1$ has been proved in Lemma~\ref{eq:lemma_l1_dis_layer1}. We hypothesize that the following inductive hypothesis holds:
\begin{equation}\label{eq:induc_hypo}
    \epsilon^{(l-1)} = 2C^{(l-1)} - 2C^{(l-1)} \prod_{k=1}^{(l-1)} \sum_{i=1}^n I_i^{(k)} A_i^{(k)}.
\end{equation}
Combine Eqn.~(\ref{eq:l1_dis_perturbed}), Eqn.~(\ref{eq:A_perturbed_bound}), and assumption $4C^{(l-1)}>1$, we can prove the inductive step:
\begin{equation}
    \mathcal{D}^{(l)} \leq 2C^{(l)} - 2C^{(l)} \sum_{i=1}^n I_i^{(l)} \hat{A}_i^{(l)},
\end{equation}
\begin{equation}
    \leq 2C^{(l)} - 2C^{(l)} \left[
    -2\epsilon^{(l-1)} + 1
    \right] \sum_{i=1}^n I_i^{(l)} A_i^{(l)},
\end{equation}
\begin{equation}
    \begin{aligned}
    &= 2C^{(l)} - \\
    &2C^{(l)} \left[
    % -2\epsilon^{(l-1)} + 1
    -4C^{(l-1)} + 4C^{(l-1)} \prod_{k=1}^{(l-1)} \sum_{i=1}^n I_i^{(k)} A_i^{(k)} + 1
    \right] \\
    &\sum_{i=1}^n I_i^{(l)} A_i^{(l)},
    \end{aligned}
\end{equation}
\begin{equation}
    \leq 2C^{(l)} -
    2C^{(l)} \left[
    \prod_{k=1}^{(l-1)} \sum_{i=1}^n I_i^{(k)} A_i^{(k)}
    \right]
    \sum_{i=1}^n I_i^{(l)} A_i^{(l)},
\end{equation}
\begin{equation}
    = 2C^{(l)} -
    2C^{(l)} \left[
    \prod_{k=1}^{(l)} \sum_{i=1}^n I_i^{(k)} A_i^{(k)}
    \right].
\end{equation}
Finally,
\begin{equation}
    \mathcal{D}^{(l)} \leq 2C^{(l)} - 2C^{(l)} \prod_{k=1}^{(l)} \sum_{i=1}^n I_i^{(k)} A_i^{(k)} = \epsilon^{(l)}.    
\end{equation}
By the principle of mathematical induction, since the statement is true for the base case and the inductive step ensures it holds for all subsequent values, the statement is proven for all $l$ in the domain.
\end{proof}

To measure the final information loss during token compression, we define the \textbf{compression loss} $\mathcal{L}$ as the $L_1$ distance of the output in the last (i.e., $L$-th) layer before and after compression:
\begin{equation}
    \mathcal{L} = ||\mathbf{y}^{(l)} - \hat{\mathbf{y}}^{(l)}||_1.
\end{equation}
By applying Lemma~\ref{lemma:error_propagation} for $l=L$. The following theorem can characterize the upper bound of the compression loss.

\begin{theorem}\label{eq:token_compression_loss_bound}
Let $C^{(l)} = \|\textbf{V}^{(l)} \textbf{W}_O^{(l)}\|_\infty$ and assume $4C^{(l)}>1$ for $l=1,2,\dots,L$.
Given token compression choices $\left\{\mathbf{I}^{(l)}\right\}_{l=1}^{(L)}$, the compression loss $\mathcal{L}$ can be bounded by $\epsilon^L$:
\begin{equation}
    \mathcal{L} 
    \leq 
    \epsilon^L
    = 
    2C^{(L)} - 2C^{(L)} \prod_{l=1}^{L} \sum_{i=1}^n I_i^{(l)} A_i^{(l)}.
\end{equation}
\end{theorem}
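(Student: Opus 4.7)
The plan is to recognize that the compression loss $\mathcal{L}$ is by definition the layer compression error $\mathcal{D}^{(L)}$ at the final layer, so Theorem~\ref{eq:token_compression_loss_bound} is exactly the $l=L$ instantiation of the layerwise bound $\mathcal{D}^{(l)} \le \epsilon^{(l)}$ already stated as Lemma~\ref{lemma:error_propagation}. The work therefore lies in proving that lemma, which I would attack by induction on the layer index $l$, with Lemma~\ref{eq:lemma_l1_dis_layer1} supplying the base case and the inductive step tracking how compression errors from layer $l-1$ perturb the softmax computation at layer $l$.

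For the base case I would invoke Lemma~\ref{lemma:output_after_comp} to write $\hat{\mathbf{y}}^{(1)} = (\hat{\mathbf{A}}^{(1)} \odot \mathbf{I}^{(1)} / F^{(1)})\,\mathbf{V}^{(1)} \mathbf{W}_O^{(1)}$ with $F^{(1)} = \|\mathbf{A}^{(1)} \odot \mathbf{I}^{(1)}\|_1$, then apply H\"older's inequality to peel off the constant $C^{(1)} = \|\mathbf{V}^{(1)} \mathbf{W}_O^{(1)}\|_\infty$, reducing the estimate to $\|(\mathbf{1} - \mathbf{I}^{(1)}/F^{(1)}) \odot \mathbf{A}^{(1)}\|_1$. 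Splitting the sum by whether $I_i^{(1)}=0$ or $1$ and using the identity $F^{(1)} = \sum_{i:\,I_i^{(1)}=1} A_i^{(1)}$ produces a symmetric cancellation that yields the compact product-form bound $2C^{(1)}\bigl(1 - \sum_i I_i^{(1)} A_i^{(1)}\bigr)$, which matches $\epsilon^{(1)}$.

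The inductive step will be the main obstacle. At layer $l>1$ the query $\hat{\mathbf{q}}^{(l)}$ is perturbed by $\delta\mathbf{q}^{(l)}$ coming from the compressed output of layer $l-1$, and one must argue $\|\delta\mathbf{q}^{(l)}\|_1 \le \epsilon^{(l-1)}$ by propagating the $L_1$ error through the intervening MLP and linear maps, essentially treating those maps as bounded operators with unit-operator-norm. The key technical move is translating this $L_1$ perturbation into a multiplicative perturbation of the softmax weights: bounding each pre-softmax logit by $\pm\,\epsilon^{(l-1)} \|\mathbf{K}_j^{(l)}\|_\infty$ via H\"older, sandwiching numerator and denominator of $\hat{A}_i^{(l)}$ in the softmax expression, and then taming the resulting factor $\exp\bigl(-2\epsilon^{(l-1)} \max_j \|\mathbf{K}_j^{(l)}\|_\infty\bigr)$ by assuming $\max_j \|\mathbf{K}_j^{(l)}\|_\infty \le 1$ and linearizing via $\exp(-x) \ge 1 - x$ to obtain $\hat{A}_i^{(l)} \ge A_i^{(l)}\bigl(1 - 2\epsilon^{(l-1)}\bigr)$. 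Substituting this lower bound into the single-layer inequality $\mathcal{D}^{(l)} \le 2C^{(l)} - 2C^{(l)} \sum_i I_i^{(l)} \hat{A}_i^{(l)}$ and invoking the inductive hypothesis for $\epsilon^{(l-1)}$, the technical assumption $4C^{(l)} > 1$ lets me discard a non-positive residual term, leaving precisely $\epsilon^{(l)} = 2C^{(l)} - 2C^{(l)} \prod_{k=1}^{l} \sum_i I_i^{(k)} A_i^{(k)}$. Specializing to $l = L$ closes the theorem; the trickiest bookkeeping will be the chain of norm inequalities that allows the $\max_j \|\mathbf{K}_j^{(l)}\|_\infty$ factor to be cleanly absorbed.
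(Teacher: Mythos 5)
Your proposal is correct and follows essentially the same route as the paper: the theorem is indeed just Lemma~\ref{lemma:error_propagation} instantiated at $l=L$, and your induction (base case via Lemma~\ref{eq:lemma_l1_dis_layer1}, inductive step via the H\"older bound on the logit perturbation, the $\exp(-x)\ge 1-x$ linearization, and the $4C^{(l)}>1$ assumption to absorb the residual) mirrors the paper's argument step for step. You even make explicit the $\max_j\|\mathbf{K}_j^{(l)}\|_\infty\le 1$ assumption that the paper uses silently when passing from $\exp[-2\epsilon^{(l-1)}\max_j\|\mathbf{K}_j^{(l)}\|_\infty]$ to $\exp[-2\epsilon^{(l-1)}]$.
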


\subsection{Minimizing the Upper Bound of Token Compression Loss}

We can prove that by selecting tokens based on top $C_{max}$ values in $\left\{A_i^l\right\}$, we can achieve a near near-optimal minimization of the upper bound of token compression loss in Lemma~\ref{eq:token_compression_loss_bound}.

\begin{theorem}
Given the token sequence budget $\sum_l\sum_iI_i^{(l)}=K$, selecting token compression choices $\left\{\mathbf{I}_*^{(l)}\right\}_{l=1}^{(L)}$ based on top $K$ values in $\left\{A_i^l\right\}$ can achieve a near-optimal minimization of the upper bound of token compression loss to $\epsilon_*^L$:
\begin{equation}
    \epsilon_*^L
    \le
    2C^{(L)}
    +
    2C^{(L)}
    \left(\frac{\epsilon_{opt}^L}{2C^{(L)}}-1\right)^
    {1-\frac{1}{e}},
\end{equation}
where $\epsilon_{opt}^L$ is the theoretical minimal of $\epsilon^L$.
\end{theorem}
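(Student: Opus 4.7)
The plan is to reduce the minimization of $\epsilon^{L}$ to a maximization problem. Since Theorem~\ref{eq:token_compression_loss_bound} gives $\epsilon^{L} = 2C^{(L)} - 2C^{(L)}\, P(\mathbf{I})$ where $P(\mathbf{I}) := \prod_{l=1}^{L} \sum_{i=1}^{n} I_i^{(l)} A_i^{(l)}$, minimizing $\epsilon^{L}$ under the budget $\sum_{l,i} I_i^{(l)} = K$ is equivalent to maximizing $P(\mathbf{I})$. Once a multiplicative guarantee of the form $P_* \ge P_{opt}^{1-1/e}$ is established for the top-$K$ choice $\mathbf{I}_*$, substituting back into $\epsilon_*^{L} = 2C^{(L)}(1 - P_*)$ and rewriting $P_{opt} = 1 - \epsilon_{opt}^{L}/(2C^{(L)})$ yields an inequality in the same shape as the statement.

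First I would view $\mathbf{I}$ as the indicator of a subset $S$ of the ground set $V = \{(l,i): l \in [L],\, i \in [n]\}$ and pass to the log-objective $f(S) := \log P(S) = \sum_{l=1}^{L} \log s_l(S)$ with $s_l(S) := \sum_{(l,i) \in S} A_i^{(l)}$. A direct computation of marginal gains $f(S \cup \{(l,x)\}) - f(S) = \log\bigl(1 + A_x^{(l)}/s_l(S)\bigr)$ shows that $f$ is monotone and submodular, since $s_l(S)$ is non-decreasing in $S$ and so these marginal gains are non-increasing in $S$. The problem therefore falls within monotone submodular maximization under a cardinality constraint, and I would invoke the Nemhauser--Wolsey--Fisher approximation theorem to obtain $f(S_g) \ge (1 - 1/e)\, f(S_{opt})$, i.e.\ $P(S_g) \ge P_{opt}^{1-1/e}$, for any greedy trajectory $S_g$.

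To connect this guarantee to the top-$K$ rule actually deployed by the method, I would use an exchange argument. For any fixed per-layer allocation $(K^{(1)}, \dots, K^{(L)})$, the within-layer sum $s_l$ is maximized by retaining the top-$K^{(l)}$ entries of $\{A_i^{(l)}\}_i$, because swapping a smaller retained value for a larger unretained one within a single layer strictly increases $s_l$ and therefore $P$. This reduces the global problem to a one-dimensional budget allocation across layers, and taking the top-$K$ entries of the merged list $\{A_i^{(l)}\}_{l,i}$ is exactly the greedy rule for this reduced problem. Combining this identification with the submodular approximation bound gives $P_* \ge P_{opt}^{1-1/e}$, and the substitution above produces the claimed bound on $\epsilon_*^{L}$.

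The hardest part will be handling the fact that $f = \log P$ is non-positive (since $P \in (0,1]$), while the classical Nemhauser guarantee is stated for non-negative submodular functions. I would address this either by shifting $f$ by a uniform constant large enough to make it non-negative on all feasible sets (such a shift leaves marginal gains and therefore the greedy trajectory unchanged, so the approximation theorem transfers cleanly), or by invoking a log-supermodular/multiplicative variant that operates directly on $P$. A secondary subtlety is that the marginal gain $\log\bigl(1 + A_x^{(l)}/s_l(S)\bigr)$ depends on the current layer sum $s_l(S)$ rather than on $A_x^{(l)}$ alone, so identifying a pure top-$K$ selection with a valid greedy trajectory requires the exchange argument above rather than a naive step-by-step comparison of greedy choices; this is where I expect the technical care to concentrate.
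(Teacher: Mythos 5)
Your skeleton matches the paper's proof exactly: reduce minimizing $\epsilon^L$ to maximizing $P(\mathbf{I})=\prod_l\sum_i I_i^{(l)}A_i^{(l)}$, pass to the log-objective $f(\mathcal{S})=\sum_l\log s_l(\mathcal{S})$, verify monotone submodularity from the marginal gains $\log\bigl(1+A_e^{(l)}/s_l\bigr)$, invoke the Nemhauser--Wolsey--Fisher $(1-1/e)$ guarantee, exponentiate to get $P_*\ge P_{opt}^{1-1/e}$, and substitute back. To your credit, you also flag the two places where this argument is fragile --- the sign of $f$ and the identification of top-$K$ selection with a greedy trajectory --- which the paper asserts without comment. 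However, neither of your proposed repairs goes through.

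First, the constant-shift fix for non-negativity fails because the $(1-1/e)$ guarantee is multiplicative and is not invariant under additive shifts: if $g=f+c\ge 0$, then $g(\mathcal{S}_g)\ge(1-1/e)g(\mathcal{S}^*)$ translates to $f(\mathcal{S}_g)\ge(1-1/e)f(\mathcal{S}^*)-c/e$, and the residual $-c/e$ does not disappear when you exponentiate, so you do not recover $P_*\ge P_{opt}^{1-1/e}$. (There is also the boundary issue that $f(\emptyset)=-\infty$ since $s_l(\emptyset)=0$.) Second, and more seriously, the exchange argument does not identify top-$K$ with greedy. It correctly shows that for a \emph{fixed} per-layer allocation one should keep the per-layer top values, but the greedy rule for the residual budget-allocation problem compares marginal gains $\log\bigl(1+A_e^{(l)}/s_l\bigr)$, i.e.\ ratios $A_e^{(l)}/s_l$, across layers --- not raw values $A_e^{(l)}$. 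Taking the top-$K$ entries of the merged list is therefore a different algorithm: with two layers holding values $\{0.5,0.4,0.3\}$ and $\{0.2,0.1\}$ and $K=3$, top-$K$ keeps only layer-1 tokens, leaves $s_2=0$, and drives the product $P$ to zero, whereas greedy on $f$ would never leave a layer empty. So the step ``top-$K$ achieves the greedy approximation guarantee'' is a genuine gap in your argument (and, it should be said, an unproved assertion in the paper's own proof as well); closing it would require either changing the selection rule to true greedy or proving a separate guarantee for global top-$K$ under the product objective, which the counterexample shows cannot hold unconditionally.
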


\begin{proof}
% 1.Reformulating the Product Objective as a Log-Sum
To minimize $\epsilon^L$, we can maximize the product-sum within:
\begin{equation}
    F(\mathcal{S}) = \prod_{l=1}^L \sum_{i \in \mathcal{S}_l} A^{(l)}_i = \exp\left(\sum_{l=1}^L \log\left(\sum_{i \in \mathcal{S}_l} A^{(l)}_i\right)\right),
\end{equation}
where \(\mathcal{S} = \{\mathcal{S}_1, \mathcal{S}_2, \dots, \mathcal{S}_L\}\) denotes the sets of selected elements for each layer, with \(\sum_{l=1}^L |\mathcal{S}_l| = K\).

Maximizing \(F(\mathcal{S})\) is equivalent to maximizing the log-product $f(\mathcal{S})$:
\begin{equation}
    f(\mathcal{S}) = \sum_{l=1}^L \log\left(\sum_{i \in \mathcal{S}_l} A^{(l)}_i\right).
\end{equation}

% . Submodularity of \(f(\mathcal{S})\)
Next, we prove its submodular property for minimization. A set function \(f\) is \textit{submodular} if it satisfies diminishing marginal returns:
\begin{equation}
    \begin{aligned}
        & f(\mathcal{S} \cup \{e\}) - f(\mathcal{S}) \ge \\ & f(\mathcal{T} \cup \{e\}) - f(\mathcal{T}), \quad \forall \mathcal{S} \subseteq \mathcal{T}.
    \end{aligned}
\end{equation}

For our problem, adding an element \(e\) (from layer \(l\)) to \(\mathcal{S}\) increases \(f(\mathcal{S})\) by:
\begin{equation}
    \Delta_f(e \mid \mathcal{S}) = \log\left(S_l + A^{(l)}_e\right) - \log(S_l),
\end{equation}
where \(S_l = \sum_{i \in \mathcal{S}_l} A^{(l)}_i\). Since \(\Delta_f(e \mid \mathcal{S})\) decreases as \(S_l\) grows (diminishing returns), \(f\) is \textit{monotone submodular}.

% Greedy Algorithm Guarantees
Selecting top$K$ values in $\left\{A_i^l\right\}$ is equal to
the greedy algorithm that iteratively selects the element \(e\) that maximizes \(\Delta_f(e \mid \mathcal{S})\). For monotone submodular functions under a cardinality constraint \(K\), the greedy algorithm achieves a $(1 - 1/e)$-approximation of the optimal solution \cite{nemhauser1978_submodular_analysis}. Formally, let \(f(\mathcal{S}^*)\) be the optimal value and \(f(\mathcal{S}_{\text{greedy}})\) the greedy solution. From submodularity theory:
\[
f(\mathcal{S}_{\text{greedy}}) \ge \left(1 - \frac{1}{e}\right) f(\mathcal{S}^*).
\]
For multiplicative objectives like \(F(\mathcal{S}) = e^{f(\mathcal{S})}\), this translates to:
\[
F(\mathcal{S}_{\text{greedy}}) \ge e^{(1 - 1/e) \log F(\mathcal{S}^*)} = F(\mathcal{S}^*)^{1 - 1/e}.
\]

Therefore,
\begin{equation}
    \prod_{l=1}^{L} \sum_{i=1}^n {I_*}_i^{(l)} {A_*}_i^{(l)}
    \ge
    \left(\prod_{l=1}^{L} \sum_{i=1}^n {I_{opt}}_i^{(l)} {A_{opt}}_i^{(l)}\right)^{1-\frac{1}{e}}.
\end{equation}

Based on Eqn.~(\ref{eq:token_compression_loss_bound}), we can finally get:
\begin{equation}
    \epsilon_*^L
    \le
    2C^{(L)}
    +
    2C^{(L)}
    \left(\frac{\epsilon_{opt}^L}{2C^{(L)}}-1\right)^
    {1-\frac{1}{e}}.
\end{equation}

% **References**  
% [1] Nemhauser, G. L., Wolsey, L. A., & Fisher, M. L. (1978). An analysis of approximations for maximizing submodular set functions. *Mathematical Programming*.

\end{proof}

\end{document}